\newcommand {\ent} {\mathrel{{\scriptstyle\mid\!\sim}}}
\newcommand {\sx} {\langle}
\newcommand {\dx} {\rangle}
\newcommand {\emme} {\mathcal{M}}
\newcommand {\vuoto} {\emptyset}
\newcommand{\tip}{{\bf T}}
\newcommand{\lc}{\mathcal{LC}}
\newcommand{\alc}{\mathcal{ALC}}
\newcommand{\alctmin}{\mathcal{ALC}+\tip_{min}}
\newcommand{\el}{\mathcal{EL}^{\bot}}
\newcommand{\elpb}{{\mathcal{EL}}^{+}_{\bot}}
\newcommand{\alctr}{\mathcal{ALC}+\tip_{\textsf{\tiny R}}}
\newcommand{\be}{\begin{enumerate}}
\newcommand{\ee}{\end{enumerate}}
\newcommand{\hide}[1]{}
\def \cases{\left \{\begin{array}{l}}
\def \endcases{\end{array}\right .}
\newcommand {\ri} {\rightarrow}
\newcommand {\bes} {\begin{description}}
\newcommand{\ens} {\end{description}}
\newcommand {\beq} {\begin{quote}}
\newcommand {\enq} {\end{quote}}
\newcommand {\bit} {\begin{itemize}}
\newcommand {\enit} {\end{itemize}}
\def \ri{\rightarrow}
\begin{document}
\bibliographystyle{plain}

\title{A conditional, a fuzzy and a probabilistic interpretation \\
of self-organising maps}

\author{Laura Giordano \inst{1} \and Valentina Gliozzi \inst{2} \and Daniele Theseider Dupr{\'{e}}  \inst{1}}

\institute{DISIT - Universit\`a del Piemonte Orientale, 
 Alessandria, Italy \\\email{laura.giordano@uniupo.it}, \ \  \email{dtd@di.unipmn.it}
 \and
Center for Logic, Language and Cognition, 
Dipartimento di Informatica, \\
Universit\`a di Torino, Italy, \\ \email{laura.giordano@uniupo.it}
}

\authorrunning{ }
\titlerunning{ }

 \maketitle

\begin{abstract} 
 In this paper  we establish a link between fuzzy and preferential semantics
 for description logics and  Self-Organising Maps, which have been proposed as possible candidates to explain the psychological mechanisms underlying category generalisation.
In particular, we show that the input/output behavior of a Self-Organising Map after training can be described by a fuzzy description logic interpretation 
as well as by a preferential interpretation, based on a concept-wise multipreference semantics, which takes into account preferences with respect to different concepts and has been recently proposed for ranked and for weighted defeasible description logics. 
Properties of the network can be proven by model checking on the fuzzy or on the preferential interpretation. 
 Starting from the fuzzy interpretation, we also provide a probabilistic account for this neural network model.

\end{abstract}

\newpage

\vspace{-0.1cm}
\section{Introduction}
\vspace{-0.1cm}

Conditional logics  
have their roots in philosophical logic.
They have  been studied
first by Lewis \cite{Lewis:73,Nute80}  to formalize 
hypothetical and counterfactual reasoning (if $A$ were the case then $B$) that
cannot be captured by classical  logic with its material
implication. From the 80's they have been considered in
computer science and artificial intelligence 
and they have provided an axiomatic foundation of non-monotonic  and 
common sense reasoning \cite{Delgrande:87,Makinson88,Pearl:88,KrausLehmannMagidor:90,Pearl90,whatdoes,BenferhatIJCAI93}.

KLM preferential approaches  \cite{KrausLehmannMagidor:90,whatdoes,Lehmann95} to common sense reasoning
have been more recently extended to description logics, to deal with inheritance with exceptions in ontologies,
allowing for non-strict forms of inclusions,
called {\em typicality or defeasible inclusions}, corresponding to conditional implications, based on different preferential semantics \cite{lpar2007,sudafricaniKR,FI09} 
and closure constructions \cite{casinistraccia2010,CasiniDL2013,dl2013,AIJ15,Pensel18,CasiniStracciaM19,BritzCMMSV21,AIJ21}. 

Fuzzy description logics have been widely studied in the literature for representing vagueness in DLs \cite{Straccia05,Stoilos05,LukasiewiczStraccia09,PenalosaARTINT15,BobilloStracciaEL18}, based on the idea that concepts and roles can be interpreted 
as fuzzy sets and fuzzy binary relations.

In this paper we aim at developing a logical interpretation of self-organising maps (SOMs) \cite{kohonen2001}, 
which have been proposed as possible candidates to explain the psychological mechanisms underlying category generalisation.
They are psychologically and biologically plausible neural network models that can also learn after limited exposure to positive category examples, without any need of contrastive information.
We consider a ``concept-wise" multi-preferential semantics  
 \cite{TPLP2020}, which has been recently introduced for a lightweight description logic of the $\el$ family \cite{rifel} and takes into account preferences with respect to different concepts. 
We show that both the fuzzy semantics and the multi-preferential semantics can be used to provide a logical interpretation of SOMs,
and allow for the verification of properties of the trained SOM by model checking. 

Both interpretations are based on the idea of associating each learned category to a concept in the language of 
the simple description logic $\lc$, which does not allow for roles and role restrictions, 
but allows for the boolean combination of concepts 
(as we will see, restricting to a language without roles is enough for the purpose of developing a logical interpretation of SOMs).
We show that the learning process in self-organising  
maps produces, as a result, either a {\em fuzzy model}, in which each  concept (or learned category) is interpreted as a fuzzy set over the domain of input stimuli,
or a {\em multipreference model}
by associating a preference relation to each concept (each learned category).
Both models can be exploited to  extract or validate 
knowledge from the empirical data used in the learning process 
and the evaluation of  such knowledge can be done by model checking, using the information recorded in the SOM.
The verification of logical properties of a neural network
can be useful 
for post-hoc explanation,
in view of a trustworthy, reliable and  explainable AI \cite{Adadi18,Guidotti2019,Arrieta2020}.

Concerning the preferential semantics, based on the assumption that the abstraction process in the SOM is able to identify the most typical exemplars for a given category,
in the semantic representation of a category, we identify some specific stimuli 
as the {\em typical exemplars} of the category, and define a preference relation among the exemplars of a category.  
To this purpose, 
we use the notion of distance of an input stimulus from a category representation.
We then exploit a notion of relative distance, introduced by Gliozzi and Plunkett in their similarity-based account of category generalization based on self-organising maps \cite{CogSci2017}, for developing another semantic interpretation of SOMs based on {\em fuzzy DL interpretations}.
\normalcolor
This is done by interpreting each category (concept) as a function mapping each input stimulus to a value in $[0,1]$, based on the map's generalization degree of category membership to the stimulus used in \cite{CogSci2017}. Our fuzzy interpretation of SOMs sticks to this specific use for category generalization.

The multipreference model of the SOM will be defined as a multipreference $\lc$ interpretation, while 
the fuzzy model of the SOM will be defined as a fuzzy $\lc$ interpretation. 
In both cases, model checking can be used for the verification of inclusions (either defeasible inclusions or fuzzy inclusion axioms) over the respective models of the SOM.
Starting from the fuzzy interpretation of the SOM we also provide a probabilistic interpretation of this neural network model  based on  Zadeh's  probability of fuzzy events  \cite{Zadeh1968}. The paper extends the 
results in \cite{CILC2020} which only discusses a preferential interpretation of SOMs.

The paper is organized as follows. Section \ref{som-general} shortly describes self-organising maps.  Section \ref{sec:LC} and \ref{sezione:fuzzyDL} contain preliminaries about the description logic ${\lc}$ and fuzzy ${\lc}$ interpretations. A concept-wise multipreference semantics for ${\lc}$ with typicality inclusions is described in Section  \ref{sec:multipref}. Section \ref{sec:preference-modelSOM} and Section \ref{sec:fuzzy_SOM},  respectively, relate self-organising maps with multipreference and a fuzzy  DL interpretations, while Section  \ref{sec:probab} provides a probabilistic interpretation of SOMs.
Section  \ref{sec:revision} hints at a possible relation between the process of updating the category representation in the SOM with change operators in knowledge representation. Section  \ref{conclusions}  concludes the paper and discusses related work.

\normalcolor

\section{Self-organising maps}\label{som-general}

Self-organising maps (SOMs, introduced by Kohonen \cite{kohonen2001}) are particularly plausible neural network models that learn in a human-like manner. In particular: SOMs learn to organize {stimuli into} categories in an {\em unsupervised} way, without the need of a teacher providing a feedback;
they can learn with just {a few} positive stimuli, without the need for negative examples or contrastive information; they reflect basic constraints of a plausible brain implementation in different areas of the cortex \cite{miikkulainen2005}, and are therefore biologically plausible models of category formation; they have proven to be capable of explaining experimental results. 

In this section we shortly describe the architecture of SOMs and report Gliozzi and Plunkett's  similarity-based account of category generalization based on SOMs \cite{CogSci2017}. In brief, in  \cite{CogSci2017} the authors judge a new stimulus as belonging  to a category by comparing the distance of the stimulus from the category representation to the precision of the category representation. 

SOMs  consist of a set of neurons, or units, spatially organized in a grid \cite{kohonen2001}, as in Figure $1$.
\begin{figure}[htbp]
\centering
\includegraphics[width=0.6\textwidth]{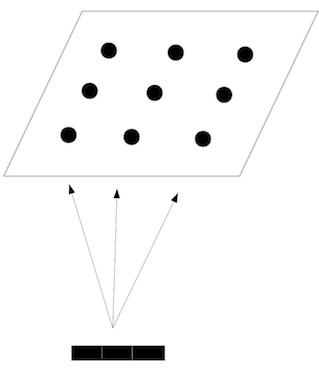}
\caption{An example of SOM. The set of rectangles stands for the input presented to the SOM (in the example the input is three-dimensional). This is presented to {\em all} neurons of the SOM (these are the neurons in the upper grid) in order to find the $BMU$.}
\end{figure}
Each map unit $u$ is associated with a
weight vector $w_u$ of the same dimensionality as the input vectors.  
At the beginning of training, all weight vectors are initialized to random values, outside the range of values of the input stimuli. 
During training, the input elements (a set $X$) are sequentially presented to all neurons of the map. After each presentation of an input $x$, the {\em best-matching unit} (BMU$_x$) is selected: this is a unit $i$ whose weight vector $w_i$ is closest to the stimulus $x$ (i.e. it minimizes  the distance $\|x - w_j\|$). 

The weights of the best matching unit and of its surrounding units are updated in order to maximize the chances that the same unit (or its surrounding units) will be selected as the best matching unit for the same stimulus or for similar stimuli {on subsequent presentations}. In particular,  the weight update reduces the distance between the best matching unit's weights (and its surrounding neurons' weights) and the incoming input. Furthermore, it organizes the map topologically so that  weights of close-by neurons are updated in a similar direction, and come to react to similar inputs.

The learning process is incremental: after the presentation of each input, the map's representation of the input (and in particular the representation of its best-matching unit) is updated in order to take into account the new incoming stimulus.
At the end of the whole process, the SOM has learned to organize the stimuli in a topologically significant way: similar inputs (with respect to Euclidean distance) are mapped to close by areas in the map, whereas inputs which are far apart from each other are mapped to distant areas of the map. 

Once the SOM has learned to categorize, in order to assess category generalization, Gliozzi and Plunkett \cite{CogSci2017} define the map's disposition to consider a new stimulus $y$ as a member of a known category $C$. This disposition is defined as a function of the {\em distance} of $y$ from the {\em map's representation} of $C$.
They take a minimalist notion of what is the map's category representation: this is the ensemble of best-matching units corresponding to known instances of the category. 
They use $BMU_{C}= \{ BMU_x \mid x \in X \mbox{ and } x \in C_i\}$ to refer to the map's representation of category $C$ and define category generalization as depending on two elements:
\begin{itemize} 
\item the distance of the new stimulus $y$ ($y \not \in X$) with respect to the category representation: $min \|y - BMU_{C}\|$ (in the following, also denoted by $d(y,C_i)$)
\item {\em compared to} the maximal distance from that representation of all known instances of the category
\end{itemize} 
This captured by the following notion of {\em relative distance} ({\em rd} for short)  \cite{CogSci2017} :

\begin{equation}
\label{relative-distance}
rd(y,C) = \frac{min \|y - BMU_{C}\| }{max_{x \in C} \| x - BMU_{x}\| }
\end{equation}
where $min \|y - BMU_{C}\|$ is the (minimal) Euclidean distance between $y$ and $C$'s category representation, and ${max_{x \in C} \| x - BMU_{x}\| }$ expresses the {\em precision} of category representation,
and is the (maximal) Euclidean distance between any known member of the category and the category representation. 

With this definition, a given Euclidean distance from $y$ to $C's$ category representation will give rise to a higher {\em relative distance rd} if category representation is precise (and the maximal distance between C and its known examples is low) than if category representation is coarse (and the maximal distance between C and its known examples is high). 
As a function of the relative distance above, Gliozzi and Plunkett  then define the  {\em map's Generalization Degree} of category $C$ membership to a new stimulus $y$. 
This is a function of the relative distance of Equation (\ref{relative-distance}).  The map's Generalization Degree exponentially decreases with the increase of the relative distance as follows:

\begin{equation}
\label{Generalization Degree}
\mbox{Generalization Degree= }e^{-rd(y,C)} 
\end{equation}

It was observed that the above notion of relative distance (Equation \ref{relative-distance}) requires there to be a memory of some of the known instances of the category being used (this is needed to calculate the denominator in the equation). This gives rise to a sort of hybrid model in which category representation and some exemplars coexist. 
An alternative way of formulating the same notion of relative distance  would be to calculate {\em online} the distance between known category instance currently examined and the representation of the category being formed. 

By judging a new stimulus as belonging  to a category by comparing the distance of the stimulus from the category representation to the precision of the category representation, Gliozzi and Plunkett demonstrate  \cite{CogSci2017}  that the Numerosity and Variability effects of category generalization, described by Griffiths and Tenenbaum \cite{tengrif2001}, and usually explained with Bayesian tools, can be {accommodated} within a simple and psychologically plausible similarity-based account. 
In the next sections, we show that the notions of distance and relative distance can be used as a basis for a logical semantics for SOMs.

\section{The description logic $\lc$}\label{sec:LC}

In this section 
and in Section  \ref{sezione:fuzzyDL} we will consider the boolean fragment of the description logic $\alc$ \cite{handbook}  as well as its fuzzy extension \cite{LukasiewiczStraccia09}.
As we will see, the boolean fragment does not allow for roles but it is expressive enough for the purpose of providing a logical interpretation of a SOM, in which categories are interpreted as atomic concepts and their combination through union, intersection and complement allows for the formulation of properties to be validated over a SOM model. 

Description logics are based on a simple set-theoretic semantics and, in the following, we will only focus on the semantics of boolean concepts.  In the two-valued case, concepts are interpreted as sets over a domain $\Delta$ of elements, that is, each concept (e.g., $\mathit{Elephant}$) is interpreted as a set of elements (the set of all elephants in $\Delta$).  In the fuzzy case, concepts are interpreted as fuzzy sets. For the two-valued case, the interpretation of union, intersection and complement is the usual one in set theory, while in the fuzzy case it depends on the underlying fuzzy logic combination functions. Individual names ($\mathit{dumbo}$, $\mathit{garfield}$, etc.)  represent specific domain elements in $\Delta$.

Let $ \lc$ be the fragment of the description logic $\alc$ which does not admit roles and universal and existential restrictions, namely, the fragment only containing union, intersection, and complement as concept constructors. 
While, as mentioned above, we restrict our consideration to the fragment of $\alc$ without roles, 
both the fuzzy and the preferential description logics considered in the following have been first introduced for description logics including roles. 

Let ${N_C}$ be a set of concept names 
  and ${N_I}$ a set of individual names.  
The set  of $\lc$ \emph{concepts} (or, simply, concepts) can be
defined inductively as follows:

 \begin{itemize}
\item
$A \in N_C$, $\top$ and $\bot$ are {concepts};
    
\item
if $C$ and $ D$ are concepts, 
then $C
\sqcap D, C \sqcup D, \neg C$ 
are {concepts}.
\end{itemize}

\noindent  
A knowledge base (KB) $K$ is a pair $({\cal T}, {\cal A})$, where ${\cal T}$ is a TBox -- i.e., a set of {\em concept inclusions} (or subsumptions) $C \sqsubseteq D$, where $C,D$ are concepts --
and
${\cal A}$ is an ABox -- i.e., a set of {\em assertions} of the form $C(a)$ 
where $C$ is a  concept and $a$ an individual name in $N_I$.

As an example, given the concept names $\mathit{Elephant}$, $\mathit{White\_Animal}$, $\mathit{African\_}$ $\mathit{ Animal}$ and $\mathit{ Young}$, the complex concepts $\mathit{Elephant \sqcap White\_Animal}$ and $\mathit{Elephant \sqcap }$ $\mathit{(White\_}$ $\mathit{Animal \sqcup African\_Animal)}$ represent, respectively, the set of white elephants and the set of elephants which are white or live in Africa.
For an individual name, $\mathit{dumbo}$, the assertion $\mathit{(Elephant \sqcap White\_}$ $\mathit{Animal)(dumbo)}$ means that Dumbo is a white elephant. The concept inclusion $\mathit{Young \sqcap White\_Animal \sqsubseteq African\_Animal}$ means that all young white elephants live in Africa (the set of young white elephants is a subset of the set of African animals).

An  $\lc$ {\em interpretation}  is defined as for $\alc$ as a pair $I=\langle \Delta, \cdot^I \rangle$ where:
$\Delta$ is a domain---a set whose elements are denoted by $x, y, z, \dots$---and 
$\cdot^I$ is an extension function that maps each
concept name $C\in N_C$ to a set $C^I \subseteq  \Delta$, 
and each individual name $a\in N_I$ to an element $a^I \in  \Delta$.
It is extended to complex concepts  as follows:
\begin{align*}
&\top^I  =\Delta     \;\;\;\;\;\;     \bot^I =\vuoto \\
	 & (\neg C)^I=\Delta \backslash C^I\\
	 &(C \sqcap D)^I  =C^I \cap D^I  \\
	 & (C \sqcup D)^I =C^I \cup D^I
\end{align*}
The notion of satisfiability of a KB  in an interpretation and the notion of entailment are defined as follows:

\begin{definition}[Satisfiability and entailment] \label{satisfiability}
Given an $\lc$ interpretation $I=\langle \Delta, \cdot^I \rangle$: 

\begin{quote}
	- $I$  satisfies an inclusion $C \sqsubseteq D$ if   $C^I \subseteq D^I$;
	
	-   $I$ satisfies an assertion $C(a)$ if $a^I \in C^I$;

\end{quote}

\noindent
 Given  a KB $K=({\cal T}, {\cal A})$, 
 an interpretation $I$  satisfies ${\cal T}$ (resp., ${\cal A}$) if $I$ satisfies all  inclusions in ${\cal T}$ (resp., all assertions in ${\cal A}$).
 $I$ is an $\lc$ \emph{model} of $K$ if $I$ satisfies ${\cal T}$ and ${\cal A}$.

 Letting a {\em query} $F$ to be either an inclusion $C \sqsubseteq D$ (where $C$ and $D$ are concepts) 
or an assertion $C(a)$, 
 {\em $F$ is entailed by $K$}, written $K \models_{\lc} F$, if for all $\lc$ models $I=$$\sx \Delta,  \cdot^I\dx$ of $K$,
$I$ satisfies $F$.
\end{definition}
Given a knowledge base $K$,
the {\em subsumption} problem is the problem of deciding whether an inclusion $C \sqsubseteq D$ is entailed by  $K$.
The {\em instance checking} problem is the problem of deciding whether an assertion $C(a)$ is entailed by $K$.

\section{Fuzzy $\lc$ interpretations}\label{sezione:fuzzyDL}

Fuzzy description logics allow to represent vagueness in DLs \cite{Straccia05,Stoilos05,LukasiewiczStraccia09,PenalosaARTINT15,BobilloStracciaEL18}, by interpreting concepts and roles  
as fuzzy sets.
As in Mathematical Fuzzy Logic \cite{Cintula2011} a formula has a degree of truth in an interpretation, rather than being either true or false,
in a fuzzy DL, axioms 
are associated to a degree of truth (typically in the interval  $[0, 1]$). 
In the following we shortly recall the semantics of a fuzzy extension of $\lc$, 
as a fragment of fuzzy $\alc$, referring to the survey by Lukasiewicz and Straccia \cite{LukasiewiczStraccia09}.
We limit our consideration to $\lc$ constructs and  only to the few features of a fuzzy DLs
which are relevant for defining an interpretation of SOMs and, in particular, we omit considering datatypes.

A {\em fuzzy interpretation} for $\lc$ is a pair $I=\langle \Delta, \cdot^I \rangle$ where:
$\Delta$ is a non-empty domain and 
$\cdot^I$ is {\em fuzzy interpretation function} that assigns to each
concept name $A\in N_C$ a function  $A^I :  \Delta \ri [0,1]$,
and to each individual name $a\in N_I$ an element $a^I \in  \Delta$.
A domain element $x \in \Delta$ 
belongs to the extension of $A$ to some degree in $[0, 1]$, i.e., $A^I$ is a fuzzy set.

The  interpretation function $\cdot^I$ is extended to complex concepts as follows: 

$\mbox{\ \ \ }$ $\top^I(x)=1$, $\bot^I(x)=0$,  

$\mbox{\ \ \ }$  $(\neg C)^I(x)=\Delta \ominus C^I(x)$, 

$\mbox{\ \ \ }$  $(C \sqcap D)^I(x) =C^I(x) \otimes D^I(x)$  

$\mbox{\ \ \ }$  $(C \sqcup D)^I(x) =C^I(x) \oplus D^I(x)$   

\noindent
(for $x \in \Delta$), and to non-fuzzy axioms (i.e., to strict inclusions and assertions of an $\lc$ knowledge base) as follows:

$\mbox{\ \ \ }$  $(C \sqsubseteq D)^I= inf_{x \in \Delta}  \; \; C^I(x) \rhd D^I(x)$

$\mbox{\ \ \ }$  $(C(a))^I=C^I(a^I)$

\noindent
where $\otimes$, $\oplus$, $\rhd$ and $\ominus$ are so-called ``combination functions, namely, triangular norms (or t-norms), triangular co-norms (or s-norms), implication functions, and negation functions, respectively, which extend the classical Boolean conjunction, disjunction, implication, and negation, respectively, to the many-valued case"  \cite{LukasiewiczStraccia09}.
They have been studied for multi-valued logics  \cite{Hahnle99} and for fuzzy DLs.  
 For instance, in both Zadeh and G\"odel logics $a \otimes b= min\{a,b\}$,  $a \oplus b= max\{a,b\}$. In Zadeh logic $a \rhd b= max\{1-a,b\}$ and $ \ominus a = 1-a$. In G\"odel logic  $a \rhd b= 1$ {\em if} $a \leq b$ {\em and} $b$ {\em otherwise};
 $ \ominus a = 1$ {\em if} $a=0$  {\em and} $0$ {\em otherwise}.
Following \cite{LukasiewiczStraccia09}, we will not commit to a specific fuzzy logic.

A {\em fuzzy $\lc$ knowledge base} $K$ is a pair $({\cal T}, {\cal A})$ where ${\cal T}$ is a fuzzy TBox  and ${\cal A}$ a fuzzy ABox. A fuzzy TBox is a set of {\em fuzzy concept inclusions} of the form $C \sqsubseteq D ~ \theta ~ n$, where $C \sqsubseteq D$ is an $\lc$ concept inclusion axiom, $\theta \in \{\geq,\leq,>,<\}$ and $n \in [0,1]$. A fuzzy ABox ${\cal A}$ is a set of {\em fuzzy assertions} of the form $C(a) ~\theta ~ n$, 
where $C$ is an $\lc$ concept, 
$a \in N_I$,  $\theta \in \{{\geq,}\leq,>,<\}$ and $n \in [0,1]$.
Following Bobillo and Straccia  \cite{BobilloStracciaEL18}, we assume that fuzzy interpretations are {\em witnessed}, i.e., the $\mathit{inf}$ (as well as the $\mathit{sup}$) is attained at some point of the involved domain.

In a fuzzy DL interpretation, an assertion like $\mathit{ White\_Animal(dumbo)}$ is interpreted as having a degree  in $[0, 1]$, rather than having a value $\mathit{true}$ or $\mathit{false}$. For instance, in a given interpretation $I$, one might have that dumbo 
is white with degree 0.6, i.e.,
$\mathit{White\_Animal^I(dumbo^I)}=0.6$, and that Dumbo is an elephant with degree $0.9$, $\mathit{Elephant^I}$ $\mathit{(dumbo^I)}=0.9$. In Zadeh logic, one would get 
$\mathit{((Elephant \sqcap}$ $\mathit{ White\_Animal)(dumbo))^I}=\mathit{(Elephant \sqcap White\_Animal)^I(dumbo^I)}= \mathit{min \{Ele}$- $\mathit{phant^I(dumbo^I), White\_Animal^I(dumbo^I)\} }=min(0.9,0.6)=0.6$, i.e., Dumbo belongs to the set of white elephants with degree $0.6$.
Hence,  a fuzzy assertion $\mathit{(Elephant}$ $\mathit{ \sqcap White\_Animal)(dumbo)} \geq 0.5$ would be satisfied in the interpretation $I$, according to the definition of satisfiability given below.

The notions of satisfiability of a KB  in a fuzzy interpretation and of entailment are defined in the natural way.
\begin{definition}[Satisfiability and entailment for fuzzy knowledge bases] \label{satisfiability}
A  fuzzy interpretation $I$ satisfies a fuzzy $\lc$ axiom $E$ (denoted $I \models E$), as follows.\\
 For $\theta \in \{\geq,\leq,>,<\}$:

- $I$ satisfies a fuzzy $\lc$ inclusion axiom $C \sqsubseteq D \;\theta\; n$ if $(C \sqsubseteq D)^I \theta\; n$;

- $I$ satisfies a fuzzy $\lc$ assertion $C(a) \; \theta \; n$ if $C^I(a^I) \theta\; n$.
 
\noindent
Given  a fuzzy KB $K=({\cal T}, {\cal A})$,
 a fuzzy interpretation $I$  satisfies ${\cal T}$ (resp. ${\cal A}$) if $I$ satisfies all fuzzy  inclusions in ${\cal T}$ (resp. all fuzzy assertions in ${\cal A}$).
A fuzzy interpretation $I$ is a \emph{model} of $K$ if $I$ satisfies ${\cal T}$ and ${\cal A}$.
A fuzzy axiom $E$   {is entailed by a fuzzy knowledge base $K$}, written $K \models E$, if for all models $I=$$\sx \Delta,  \cdot^I\dx$ of $K$,
$I$ satisfies $E$.
\end{definition}

\section{A concept-wise multipreference semantics for $\lc$ plus typicality} \label{sec:multipref}

In this section  we describe an extension of $\lc$ with typicality inclusions, defined along the lines of the extension of description logics with typicality \cite{lpar2007,AIJ15}, but under a different semantics with multiple preferences \cite{TPLP2020}.
This concept-wise multi-preference semantics has been originally introduced for the description logic ${\cal EL}^{+}_\bot$ of the ${\cal EL}$ family  \cite{rifel}, which is at the basis of OWL2 EL Profile. The semantics is a variant of the multipreferential semantics developed by Gliozzi  \cite{GliozziAIIA2016} to define a refinement of the Rational Closure semantics  for description logics.  In the following we reformulate concept-wise multipreference semantics for $\lc$. 

In addition to standard concept inclusions $C \sqsubseteq D$ (called  {\em strict} inclusions in the following), 
typicality inclusions are allowed having the form $\tip(C) \sqsubseteq D$, where $C$ and $D$ are $\lc$ concepts. 
A typicality inclusion $\tip(C) \sqsubseteq D$ means that ``typical C's are D's" or ``normally C's are D's" and corresponds to a conditional implication $C \ent D$ in Kraus, Lehmann and Magidor's (KLM) preferential approach \cite{KrausLehmannMagidor:90,whatdoes}. 
Such inclusions are defeasible, i.e.,  admit exceptions, while 
strict inclusions must be satisfied by all domain elements.

Let ${\cal C}= \{C_1, \ldots, C_k\}$ be a set of (general) $\lc$ concepts, called {\em distinguished concepts}. 
For each concept $C_i \in {\cal C}$, we introduce a modular preference relation $<_{C_i}$ which describes the preference among domain elements with respect to $C_i$.
Each preference relation $<_{C_i}$ has the same properties of preference relations in KLM-style ranked interpretations \cite{whatdoes}, it is a modular and well-founded strict partial order (i.e., an irreflexive and transitive relation). In particular, a preference relation  $<_{C_i}$ is {\em well-founded} 
if, for all $S \subseteq \Delta$, if $S\neq \emptyset$, then $min_{<_{C_i}}(S)\neq \emptyset$;
 relation  $<_{C_i}$ is {\em modular} if,
for all $x,y,z \in \Delta$, if $x <_{C_i} y$ then $x <_{C_i} z$ or $z <_{C_i} y$.

Observe that, as usual, the strict preference relation $<_{C_i}$ can be defined from a total preorder $\leq_{C_i}$ by letting 
$x <_{C_i} y$ iff $x \leq_{C_i} y$ and not $y \leq_{C_i} x$. An equivalence relation $\sim_{C_i}$ can be defined as: $x \sim_{C_i} y$ iff $x \leq_{C_i} y$ and  $y \leq_{C_i} x$.

  \begin{definition}[Multipreference interpretation]\label{interpretazione_Ci}  
A {\em multipreference interpretation}  is a tuple
$\emme= \langle \Delta, <_{C_1}, \ldots, <_{C_k}, \cdot^I \rangle$, 
where:
\begin{itemize}
\item[(a)] $\Delta$ is a non-empty domain;
 
\item[(b)] $<_{C_i}$ is an irreflexive, transitive, well-founded and modular relation over $\Delta$;

\item[(d)]  
$\cdot^I$ is an interpretation function, defined as in $\lc$ interpretations 
(see Section \ref{sec:LC}).
\end{itemize}

\end{definition}
Observe that, given a multipreference interpretation, a triple $\emme_{C_i}= \langle \Delta, <_{C_i}, \cdot^I \rangle$, which can be associated to each concept $C_i$, is a ranked interpretation as the ones considered for $\alc$ plus typicality in \cite{AIJ15}. 
The preference relation $<_{C_i}$ allows the set of {\em prototypical  $C_i$-elements} to be defined as the set of $C_i$-elements which are minimal with respect to $<_{C_i}$, i.e., the set $min_{<_{C_i}} (C_i^I)$.
As a consequence, the multipreference interpretation above is able to single out the typical $C_i$-elements, for all distinguished concepts $C_i \in {\cal C}$.

The multipreference  interpretations have been first introduced in \cite{TPLP2020}, to develop a semantics for ranked knowledge bases in a lightweight description logic 
which is at the basis of OWL2 EL Profile \cite{OWL}, based on an approach
inspired by Brewka's framework of basic preference descriptions  \cite{Brewka04}.
In the following we will shortly reformulate for $\lc$ the notion of  ranked knowledge base, 
and we will just recall the main ideas and results concerning concept-wise multipreference entailment 
without providing a reformulation for $\lc$.
In fact, in the following sections, we will not focus on entailment but we will construct a multipreference model of a SOM and  use it for validation of strict and preferential properties. 

A ranked $\lc$ knowledge base is a tuple $\langle  {\cal T}_{strict}, {\cal T}_{C_1}, \ldots, {\cal T}_{C_k}, {\cal A}  \rangle$, 
where ${\cal T}_{strict}$ is a set of standard concept and role inclusions, ${\cal A}$ is an ABox and, for each $C_j \in {\cal C}$,  ${\cal T}_{C_j}$ is a ranked TBox containing all the defeasible inclusions, $\tip(C_i) \sqsubseteq D$, specifying the typical properties of $C_i$-elements, with their ranks (non-negative integers). In the following, we will denote ${\cal T}_{C_j}$ as the set 
$\{(d^j_i,r^j_i)\}$, where  each  $d^j_i$ is a typicality inclusion of the form $\tip(C_j) \sqsubseteq D^j_i$,  and $r^j_i$ its rank.
The defeasible inclusions with higher ranks are considered to be more plausible (and hence more important) than the ones with lower ranks.
Let us consider the following example:

\begin{example} \label{exa:horse}
	Consider the ranked KB $\mathit{K= \langle {\cal T}_{strict}, {\cal T}_{Horse}, {\cal T}_{Zebra}, {\cal T}_{Bird}, {\cal T}_{Penguin},  {\cal A} \rangle}$ (with ${\cal A}= \emptyset$), where ${\cal T}_{strict}$ contains the strict inclusions:

$\mathit{Horse \sqsubseteq  Mammal}$  \ \ \ \ \ \  
$\mathit{Mammal \sqsubseteq Animal}$

$\mathit{Zebra \sqsubseteq  Mammal}$    \ \ \ \ \ \  $\mathit{Horse \sqcap Zebra \sqsubseteq  \bot}$  \ \ \ \ \ \  
		
	\noindent
the ranked TBox ${\cal T}_{Horse}=\{(d_1,0),(d_2,1),(d_3, 1),(d_4,2)\}$  contains the defeasible inclusions:  
	
	$(d_1)$ $\mathit{\tip(Horse) \sqsubseteq Has\_Long\_Mane}$ \ \ \ \ \ \ \ \ \ 
	
	$(d_2)$ $\mathit{\tip(Horse) \sqsubseteq Tall }$
	
	$(d_3)$ $\mathit{\tip(Horse) \sqsubseteq RunFast}$  \ \ \ \ \ \ \ \ \  \ \ \ \ \ \ \ \ \  \ \ \ \ \ \ \ \ \ \  
	
	$(d_4)$ $\mathit{\tip(Horse) \sqsubseteq  Has\_Tail}$

\noindent
the ranked TBox ${\cal T}_{Zebra}= \{(d_5,0), (d_6,1), (d_7,2), (d_8,2)\}$ contains the defeasible inclusions:

$(d_5)$ $\mathit{\tip(Zebra) \sqsubseteq RunFast}$

$(d_6)$ $\mathit{\tip(Zebra) \sqsubseteq  \neg Tall}$

$(d_7)$ $\mathit{\tip(Zebra) \sqsubseteq Striped}$

$(d_8)$ $\mathit{\tip(Zebra) \sqsubseteq  Has\_Tail}$

\noindent

	\end{example}
The ranked Tbox ${\cal T}_{Horse}$  can be used to define an ordering among domain elements comparing their typicality as horses.
For instance, given two horses {\em Spirit} and {\em Buddy}, if Spirit has long mane, is tall, has a tail, but does not run fast, it is intended to be more typical than  Buddy, a horse running fast, tall, with long mane, but without tail, as having a tail (rank 2) is a more important property for horses wrt running fast (rank 1). We expect that the ranked knowledge base $K$ above gives rise to multipreference models, where the two preference relations $<_\mathit{Horse}$ and $<_\mathit{Zebra}$ represent the preference among the elements of the domain $\Delta$ according to concepts  $\mathit{Horse}$ and $\mathit{Zebra}$, respectively. We omit the specification of defeasible inclusions for the other distinguished concepts.

Given a DL interpretation $I=\langle \Delta, \cdot^I \rangle$, 
a  modular partial order $<_{C_i}$ can be defined over $\Delta$ for each concept $C_i \in {\cal C}$, where $x <_{C_i} y$ means that $x$ is more typical than $y$ wrt $C_i$ (in the example, $\mathit{Spirit<_{Horse} Buddy}$). 
The definition of the preference relations $ <_{C_1}, \ldots, <_{C_k}$ starting from a ranked knowledge base, can exploit different closure constructions. A lexicographic strategy has been considered in \cite{TPLP2020}, which is one of the strategies considered in  Brewka's framework of basic preference descriptions  \cite{Brewka04} and relates to Lehmann's lexicographic closure construction. However, 
the concept-wise multipreference semantics can better be regarded as a framework  in which alternative notions of preference can be combined \cite{nmr2020}, starting from a modular knowledge base, and alternative preference constructions can been considered for each module, including rational closure, lexicographic closure (as in lexicographic modules \cite{nmr2020}), 
Kern-Isberner's c-representations \cite{Kern-Isberner01,Kern-Isberner2014} or the closure construction introduced for weighted defeasible $\el$ knowledge bases \cite{JELIA2021}.
An algebraic framework for preference combination in Multi-Relational Contextual Hierarchies has  been recently developed by Bozzato et al. \cite{BozzatoIclp2021}.  

In the following, we will assume that the preferences with respect to concepts are given and we reformulate for $\lc$ the notion of concept-wise multi-preference interpretation 
by {\em combining} the preference relations $<_{C_i}$ into a global preference relation $<$. 
This is needed for reasoning about the typicality of arbitrary $\lc$ concepts 
which do not belong to the set of distinguished concepts ${\cal C}$.
For instance,  we may want to verify whether the typicality inclusion $\mathit{\tip(Horse \sqcup Zebra) \sqsubseteq RunFast}$ is satisfied in some multipreference interpretation, i.e., whether the typical instances of concept $\mathit{Horse \sqcup Zebra}$ have the property of running fast.
To check  inclusions of this kind more than one preference relation may be relevant. In this example,  both preference relations $<_\mathit{Horse}$ and $<_\mathit{Zebra}$ are relevant, and they might as well be conflicting for  pairs of domain elements. For instance, if Buddy has all typical properties of a horse and  Marty has all  typical properties of a zebra, Buddy can be more typical than Marty as a horse ($\mathit{Buddy <_\mathit{Horse} Marty}$), but more exceptional as a zebra ( $\mathit{Marty <_\mathit{Zebra} Spirit}$). By {\em combining} the preference relations $<_{C_i}$ into a single {\em global preference} $<$,
one can interpret the concept $\tip(C)$ (the set of typical instances of $C$) as the set of  the minimal $C$-elements with respect to the global preference $<$.
In the case above, neither Marty would be globally preferred to Spirit, nor vice-versa, but both of them should be regarded as typical among $\mathit{(Horse \sqcup Zebra)}$-elements.

In order to define a global preference relation, 
we take into account the specificity relation among concepts,
such as, for instance, the fact that a concept like $\mathit{Baby\_Horse}$ is more specific than concept $\mathit{Horse}$. 
The idea is that,  in case of conflicts, the properties of a more specific class (such as that baby horses normally are not tall) should  override the properties of a less specific class (such as that horses normally are tall).
\begin{definition}[Specificity] \label{specificity}
A {\em specificity relation} among concepts in ${\cal C}$ is a binary relation 
$\succ \subseteq {\cal C} \times {\cal C}$ which is irreflexive and transitive.
\end{definition}
For $C_h, C_j \in {\cal C}$, $C_h \succ C_j$ means that $C_h$ is {\em more specific than}  $C_j$.
The simplest notion of {\em specificity} among concepts with respect to a knowledge base $K$ (an ontology) is based on the subsumption hierarchy: 
 $C_h \succ C_j$ holds
iff $C_h \sqsubseteq C_j$ is subsumed from the ontology, while $C_j \sqsubseteq C_h$ is not.
The notion of specificity based on the subsumption hierarchy has been considered in many approaches to non-monotonic reasoning in DLs, 
including prioritized defaults \cite{baader95b}, prioritized circumscription \cite{BonattiLW06},
and ${\cal DL}^N$  \cite{bonattiAIJ15}. 
Another notion of specificity considered in the literature is the one given by the ranking of concepts in the rational closure \cite{whatdoes} of the knowledge base.

Exploiting the specificity relation among concepts,
the global preference $<$ can be defined by a modified Pareto combination of the relations $<_{C_1}, \ldots,<_{C_k}$,
which takes into account specificity, as follows:
\begin{equation}  \label{def:<}
\begin{aligned}
x <y  \mbox{ iff \ \ } 
(i) &\  x <_{C_i} y, \mbox{ for some } C_i \in {\cal C}, \mbox{ and } \\
(ii) & \ \mbox{  for all } C_j\in {\cal C}, \;  x \leq_{C_j} y   \mbox{ or }  \exists C_h (C_h \succ C_j  \mbox{ and } x <_{C_h} y )
\end{aligned}
\end{equation}
The intuition is that $x< y$ holds if there is at least a concept $C_i \in {\cal C}$ such that $ x <_{C_i} y$ and,
for all other concepts $C_j \in {\cal C}$,   either $x \leq_{C_j} y$ holds or, in case it does not, there is some $C_h$ more specific than $C_j$ such that $x <_{C_h} y$ (i.e., preference  $<_{C_h}$ overrides $<_{C_j}$).
In the example above, for two baby horses (who are also horses) $x$ and $y$, if $\mathit{x <_\mathit{Baby\_Horse} y}$ and $\mathit{y <_\mathit{Horse} x}$, we will have $\mathit{x < y}$, that is, $x$ is regarded as being globally more typical than $y$ 
as it satisfies more properties of typical baby horses  wrt $y$,  
although $y$ may satisfy more properties of typical horses wrt $x$.

We can now formulate the notion of concept-wise multipreference interpretation \cite{TPLP2020} for $\lc$. 
 \begin{definition}[concept-wise multipreference interpretation]\label{def-multipreference-int}  
A {\em concept-wise multipreference interpretation (or {\em cwm}-interpretation)}  is a tuple $\emme= \langle \Delta, <_{C_1}, \ldots,<_{C_k}, <, \cdot^I \rangle$
such that:   
\begin{itemize}
 
\item[(a)]  $\Delta$ is a non-empty domain; 

\item[(b)] for all $i=1,\ldots, k$, $<_{C_i}$ is an irreflexive, transitive, well-founded and modular relation over $\Delta$; 

\item[(c)]  $<$ is the (global) preference relation over $\Delta$ defined from  $<_{C_1}, \ldots,<_{C_k}$ in  (\ref{def:<}); 
\item[(d)]  $\cdot^I$ is an interpretation function, as defined for $\lc$ interpretations 
(see Section \ref{sec:LC}),
with the addition that, for typicality concepts, we let: 
$$(\tip(C))^I = min_{<}(C^I)$$
where $Min_<(S)= \{u: u \in S$ and $\nexists z \in S$ s.t. $z < u \}$.

\end{itemize}
\end{definition}
It has been proven that the relation $<$ is an irreflexive, transitive and well-founded relation \cite{TPLP2020}.
As a consequence, the triple $ \langle \Delta,  <, \cdot^I \rangle$ is a KLM-style preferential interpretation, as those introduced for $\alc$ with typicality \cite{FI09}, while it is not necessarily a modular interpretation. 

The notion of {\em cwm}-model of a ranked KB and the notion of {\em cwm}-entailment can be defined in a natural way for ranked $\lc$ knowledge bases, as it has been done for the lightweight description logic $\elpb$ \cite{TPLP2020}. Let us mention that concept-wise multipreference entailment has been proven to satisfy the KLM postulates of a preferential consequence relation \cite{whatdoes}, and to have good properties such as avoiding the blocking inheritance problem, a well known problem of the rational closure and System Z  \cite{Pearl90,BenferhatIJCAI93},
roughly speaking the problem that, if a subclass of $C$ is exceptional for a given aspect, it is exceptional tout court and does not inherit any of the typical properties of $C$. 
Proof methods for reasoning with ranked and weighted knowledge bases in description logics under the concept-wise multipreference semantics, have been investigated  for lightweight description logics of the $\cal EL$-family \cite{rifel} (namely, for ranked ${\cal EL}^{+}_\bot$ KBs \cite{TPLP2020}  and for weighted $\el$ KBs \cite{ICLP2021}), 
by exploiting Answer Set Programming \cite{GelfondLeone02} and {\em asprin} \cite{BrewkaAAAI15} for defeasible inference.

In the following, 
we will address the problem of defining an $\lc$ multipreference interpretation as a semantic model of a self-organising map. 
Given such a model, the verification of the logical  properties that hold in the SOM can be done by model checking, i.e., by verifying the satisfiability of strict and defeasible inclusions in the model.

\section{Relating self-organising maps and multi-preference models} \label{sec:preference-modelSOM}

We aim at showing that, once the SOM has learned to categorize, we can regard the result of the categorization as a multipreference interpretation. 
Let $X$ be the set of input stimuli from  different categories $C_1, \ldots, C_k$, which have been considered during the learning process.

For each category $C_i$, we let $BMU_{C_i}$ be the set of best-matching units corresponding to  input stimuli of category $C_i$, as in Section \ref{som-general}.
We regard the learned categories $C_1, \ldots, C_k$ as being concept names (atomic concepts) in the description logic (i.e., concept names in $N_C$)  and we let them constitute our set of distinguished concepts ${\cal C}= \{C_1, \ldots, C_k\}$. 
We introduce an individual name $a_x\in N_I$ for each possible stimulus $x$ in the space of the possible stimuli. 

In order to construct a multi-preference interpretation we proceed as follows:
first, we fix the {\em domain} $\Delta^{s}$ to be the space of possible stimuli, that we will assume to be finite and to include $X$, the set of all the input stimuli considered during training;  
then,  for each category (concept) $C_i$,  we define a preference relation $<_{C_i}$ over $\Delta^s$ by exploiting a notion of distance of a stimulus $y$ from the map's representation of $C_i$. Finally, we define the interpretation of concepts.

Let $\Delta^{s}$ be a finite set of possible stimuli, including all input stimuli considered during training ($X \subseteq \Delta^s$) as well as the best matching units of input stimuli (i.e., $\{BMU_x \mid x \in X \} \subseteq \Delta^s$). 
 We therefore build a hybrid model, in which input exemplars and category representations coexist. Notice that we can consider these elements together because they have the same dimensionality. As we will see, category representations will be useful in reasoning about typicality.

Once the SOM has learned to categorize, the notion of distance $d(x,C_i)$ of a stimulus $x$ from a category $C_i$  
introduced above can be used to build  
a binary preference relation $<_{C_i}$ among the stimuli in $\Delta^s$ w.r.t. category $C_i$ as follows:  for all $x, x' \in \Delta^s$,
\begin{align}\label{preferenza_Ci}
x <_{C_i} x' \mbox{\ \  iff \ \ } d(x,C_i) < d(x' ,C_i)
\end{align}
Each preference relation $<_{C_i}$ is a strict partial order relation on $\Delta^s$.
The relation $<_{C_i}$ is also well-founded as we have assumed $\Delta^{s}$ to be finite \footnote{Observe that we could have equivalently used the notion of relative distance  $rd(x,C_i)$ of a stimulus $x$ from a category $C_i$ to define the preference relation, as done in \cite{CILC2020}, rather than using the notion of distance $d(x,C_i)$.}.

We exploit this notion of preference  
to define a concept-wise multipreference interpretation associated to the SOM, and we call it a {\em cwm}-model of the SOM.

\begin{definition}[Concept-wise multipreference-model of a SOM]\label{modello_Som}  
The {\em concept-wise  multipreference model (or {\em cwm}-model) of the SOM} is a cwm-interpretation 
$\emme^{som}= \langle \Delta^{s}, <_{C_1}, \ldots, <_{C_k}, <, \cdot^I \rangle$ 
such that:
\begin{itemize}
\item[(i)] $\Delta^{s}$ is the set of the possible stimuli, as introduced above; 

\item[(ii)]
for each $C_i \in {\cal C}$, $<_{C_i}$ is the preference relation defined by equivalence (\ref{preferenza_Ci});

\item[(iii)] 
$<$ is the global preference relation defined from $ <_{C_1}, \ldots, <_{C_k},$  
as in equation ( \ref{def:<});

\item[(iv)]  
the interpretation function $\cdot^I$ is defined  for individual names $a_x \in N_I$ as $a_x^I=x$,
 and for concept names (i.e. categories) $C_i$ as follows:  
 $$C_i^I= \{y \in \Delta^s \mid d(y,C_i) \leq d_{max,C_i} \}$$
 where $ d_{max,C_i}$   is the maximal distance of an input stimulus $x \in C_i$ from category $C_i$, that is,
$d_{max,C_i} = max_{x \in C_i} \{d(x, C_i)\}$. 
The interpretation function $\cdot^I$ is extended to complex concepts 
according to Definition \ref{def-multipreference-int}, point (d).

\end{itemize}
\end{definition}
Informally, we interpret  as $C_i$-elements the stimuli whose 
distance from category $C_i$ is not larger than the
distance of any input exemplar belonging to category $C_i$\footnote{As we can see, roles are not necessary to define a preferential model of a SOM. This is the reason why we have not introduced them in the language, although they were present (but not  used) in the preliminary version of the paper \cite{CILC2020}, exploiting the description logic $\el$. On the other hand, here, we have preferred to considered a logic with union and complement operators,  which are not present  in $\el$ but useful for the specification of the properties to be verified.}.

Given $<_{C_i}$, we can identify the most typical $C_i$-elements  (wrt $<_{C_i}$)  
as the $C_i$-elements  in $min_{<_{C_i}}(C_i^I)$, whose  distance from category $C_i$ is minimal.
Observe that 
 the best matching unit $BMU_x$ of an input stimulus $x \in C_i$ is an element of $\Delta^s$.
Hence, for $y=BMU_x$,
the   distance $d(y,C_i)$ of $y$ from category $C_i$ is $0$, as  $min \|  y - BMU_{C_i} \| =0$.  
Therefore, 
$min_{<_{C_i}}(C_i^I) =\{ y \in \Delta^s \mid \; d(y,C_i)=0\}$ and $BMU_{C_i} \subseteq min_{<_{C_i}}(C_i^I)$.
The converse inclusion $BMU_{C_i} \supseteq min_{<_{C_i}}(C_i^I)$ might not hold.  In fact, in case there is some input exemplar $x \in C$ 
whose weight vector exactly coincides with the weight vector of its best matching unit $BMU_x$, then both $x$ and $BMU_x$ belong to $min_{<_{C_i}}(C_i^I)$.  Notice that $x$ and $BMU_x$ are different elements of $\Delta^s$ ,  
even when they are associated to the same vector.

In $\emme^{som}$, as in all {\em cwm}-interpretations (see Definition  \ref{def-multipreference-int}), the interpretation of typicality concepts $\tip(C)$ is defined based on the global preference relation $<$ as $(\tip(C))^I= min_<(C^I)$, for all  concepts $C$. The model $\emme^{som}$ can be considered a sort of canonical model representing what holds in the SOM after the learning phase.
Note that, for all domain elements $y \in \Delta^s$ which have not been considered during training and do not correspond to some best matching unit, whether $y \in C_i^I$ for a concept $C_i$ and whether $y <_{C_i} x$, for $x \in \Delta^s$, is determined based on the distance $d(y,C_i)$ of $y$ from category $C_i$, as for the input stimuli $x \in X$ (see points $(iii)$, $(iv)$ in Definition \ref{modello_Som}).

As $\emme^{som}$ is a {\em cwm}-interpretation, the triple $ \langle \Delta^{s}, <, \cdot^I \rangle$ is a KLM style preferential interpretation  \cite{KrausLehmannMagidor:90,whatdoes}. It follows that $\emme^{som}$ provides a preferential semantics of the SOM.
The preferential interpretation $\emme^{som}$ determines a preferential consequence relation (the set of all conditionals $\tip(C)\sqsubseteq D$ true in $\emme^{som}$), which satisfies all KLM postulates of a preferential consequence relation \cite{KrausLehmannMagidor:90}.

\subsection{Evaluation of strict and defeasible concept inclusions by model checking} \label{sec:model_checking}

In the previous section, we have defined a multipreference interpretation $\emme^{som}$ from a SOM where we are able to identify,  on the domain $\Delta^{s}$ of the possible stimuli,  the set of $C_i$-elements as well as the set of  most typical $C_i$-elements wrt $<_{C_i}$, for each category $C_i$.
Provided the necessary information are recorded during training, 
we are then able to evaluate strict inclusions $C \sqsubseteq D$ and defeasible inclusions  $\tip(C) \sqsubseteq D$, 
by {\em model checking},  i.e., by verifying their satisfiability in the model $\emme^{som}$. 

For instance, we may want to check whether elephants are African or Asiatic animals (i.e., if $\mathit{Elephant \sqsubseteq African\_Animal \sqcup Asian\_Animal}$ holds in the model), or whether typical elephants are African or Asiatic animals
(i.e., $\mathit{\tip(Elephant) \sqsubseteq African\_Animal \sqcup }$ $\mathit{Asian\_Animal}$), 
or whether typical big elephants are African
(i.e., $\mathit{\tip(Elephant \sqcup Big\_}$ $\mathit{Animal) \sqsubseteq }$ $\mathit{ African\_Animal}$), 
provided the concepts $\mathit{Elephant}$, $\mathit{African\_Animal}$,  $\mathit{Asi}$- $\mathit{atic\_Animal}$ and $\mathit{Big\_Animal}$ correspond to learned categories.
The latter inclusions represent weaker properties with respect to the first one as they are only concerned with the typical instances of concepts. A strict inclusion $\mathit{Elephant \sqsubseteq Big\_Animal}$, for instance, cannot be expected to be true in the model of the SOM, if  input stimuli also include small elephants such as baby elephants, while a defeasible inclusion $\mathit{\tip(Elephant) \sqsubseteq Big\_Animal}$ might hold.

The verification of general concept inclusions, where $C$ and $D$ are $\lc$ concepts, i.e.,  boolean combinations of concepts in ${\cal C}$, may require to record the extensions of  concepts $C_i$'s in the model $\emme^{som}$ and to perform set-theoretic operation to compute the interpretation of concepts. For instance, concept $(C_1 \sqcap C_2) \sqcup \neg C_3$ is interpreted as $(C_1^{I} \cap C_2^{I}) \cup (\Delta^s \backslash C_3^{I})$. Finding  typical instances of this concept would further require to compute the global preference relation $<$ among its instances and, in the worst case, to verify for all pairs of input stimuli $x,y \in \Delta^s$ whether $x< y$ or $ y < x$ or $x$ and $y$ are incomparable. This, in turn, requires to check, for all categories $C_i$, whether $x <_{C_i} y$ or $y <_{C_i} x$ or $x \sim_{C_i} y$, based on the  distances $d(x,C_i)$ and $d(y,C_i)$.
This might be challenging in practice, depending on the size of the domain, 
 although the verification requires a polynomial number of checks. 
In the following of this section, we will let $b$ to be the largest number of best matching units for a  category. Note that $b$ cannot be larger than the size of the set $X$ of all input stimuli.

\begin{proposition}
The verification that a general typicality inclusion is satisified in the multipreference interpretation $\emme^{som}$ is $O(n^3  \times k)$, where $n$ is the size  of $\Delta^s$ and  $k$ the number of categories. 
\end{proposition}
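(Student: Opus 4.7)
The plan is to reduce the verification of a typicality inclusion $\tip(C) \sqsubseteq D$ in $\emme^{som}$ to a bounded number of distance computations and preference comparisons, and then to count how many of these are needed in the worst case. Since, by Definition \ref{def-multipreference-int}, $(\tip(C))^I = min_<(C^I)$, checking that $\emme^{som}$ satisfies $\tip(C) \sqsubseteq D$ amounts to verifying, for each $x \in \Delta^s$, that whenever $x \in C^I$ and no $z \in \Delta^s$ has both $z \in C^I$ and $z < x$, one has $x \in D^I$. I would bound separately the cost of the three primitive tests ``$x \in E^I$'' for an $\lc$-concept $E$, ``$x <_{C_i} y$'' for an atomic preference, and ``$x < y$'' for the global preference.

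First I would account for the atomic subroutines. Given $x \in \Delta^s$ and a category $C_i \in {\cal C}$, computing $d(x,C_i) = min \|x - BMU_{C_i}\|$ requires scanning the set $BMU_{C_i}$, whose size is bounded by $|X| \leq n$, so $d(x,C_i)$ is obtained in $O(n)$ time. By Definition \ref{modello_Som}(iv), membership $x \in C_i^I$ is then a single comparison against the precomputed threshold $d_{max,C_i}$. Hence deciding $x \in E^I$ for a fixed $\lc$-concept $E$ (treated as having size $O(k)$) reduces to $O(k)$ atomic tests and takes $O(nk)$ time; and $x <_{C_i} y$ is settled in $O(n)$ once the two relevant distances are computed.

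Next I would analyse the cost of one global preference test $z < x$. By equation (\ref{def:<}) it splits into: (i) finding some $C_i \in {\cal C}$ with $z <_{C_i} x$, and (ii) checking that for every $C_j \in {\cal C}$ either $z \leq_{C_j} x$ holds or there exists $C_h \succ C_j$ with $z <_{C_h} x$. Assuming the specificity relation $\succ$ is precomputed and that the distances $d(z,C_i)$ and $d(x,C_i)$ for all $i$ (total cost $O(nk)$) are computed once and reused, condition (i) is then $O(k)$ and condition (ii) contributes at most $O(k^2)$ extra comparisons, which is absorbed into $O(nk)$ in the regime $k \leq n$. Thus a single global test $z < x$ costs $O(nk)$.

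Finally I would combine the counts. Fix $x \in \Delta^s$. Testing $x \in C^I$ costs $O(nk)$. To test that $x$ is $<$-minimal in $C^I$ one iterates over the at most $n$ candidates $z \in \Delta^s$, performing for each one a membership test $z \in C^I$ and, if it succeeds, a preference test $z < x$, each of cost $O(nk)$; the per-$x$ subtotal is therefore $O(n^2 k)$. Adding the final test $x \in D^I$ (only $O(nk)$) and summing over the $n$ choices of $x$ yields the claimed $O(n^3 k)$ bound. The main obstacle in the analysis is the bookkeeping for the global preference $<$, which entangles the $k$ per-concept orders through the specificity relation and therefore cannot be treated as a single constant-time comparison; the key trick is to recognise that the distances used for condition (i) can be reused for condition (ii) of (\ref{def:<}), so that the exponent on $k$ does not blow up beyond what the statement asserts.
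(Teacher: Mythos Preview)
Your proposal is correct and follows essentially the same approach as the paper: bound the cost of the primitive operations (distance $d(x,C_i)$, atomic membership, per-concept preference, global preference) and then count how many are needed in the nested loops over $\Delta^s$. The only cosmetic difference is that the paper carries an intermediate parameter $b$ (the maximal size of a $BMU_{C_i}$) through the analysis before invoking $b \leq n$, whereas you substitute $n$ from the outset; your explicit treatment of the specificity clause in condition~(ii) of~(\ref{def:<}), with the $O(k^2)$ term absorbed into $O(nk)$, is in fact slightly more careful than the paper's own accounting at that step.
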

\begin{proof} 
Consider a typicality inclusion $\tip(C) \sqsubseteq D$, where $C$ and $D$ are general $\lc$ concepts.

Observe that computing $d(x,C_i)$ for some $x \in \Delta^s$ and concept $C_i \in {\cal C}$ requires  to compute the distance of $x$ from all the best matching units in $BMU_{C_i}$. This requires a number of steps equal to the size of $BMU_{C_i}$. Hence, computing $d(x,C_i)$ is $O(b)$.

Verifying  whether $x \in C_i$, for $x \in \Delta^s$ and concept $C_i \in {\cal C}$, amounts to check whether $d(x,C_i) \leq d_{max,C_i}$, and is $O(b)$, if $d_{max,C_i}$ has been computed during training.
Verifying whether $x <_{C_i} y$ for $x, y \in \Delta^s$ and $i=1,\ldots,k$, requires to check whether $d(x,C_i)  < d(y,C_i) $,  
and is $O(b)$ (and, similarly, verifying  $x \sim_{C_i} y$).
Checking $x < y$, requires  to verify whether $x <_{C_i} y$ and whether $x \sim_{C_i} y$ for all $C_i \in {\cal C}$.  Hence, checking $x < y$ for some $x, y \in \Delta^s$ is $O(b \times k)$. 

Identifying the instances of a given boolean concept $C$ requires to verify, for each domain element $x \in \Delta^s$, if $x \in C^I$. 
It requires to check whether $x$ belongs to $C_i$, for all concepts $C_i$ occurring in $C$.
As checking whether $x$ belongs to $C_i$ is $O(b)$, verifying this for all $C_i$ occurring in $C$ is $O(b \times k)$.

Once we know whether $x \in C_i^I$ is known for all $i$'s, determining whether $x \in C^I$
requires to evaluate a boolean expression, which is linear in the size of the expression $C$ (that we assume to be $O(k)$). Overall, verifying whether $x \in C^I$ is $O(b \times k^2)$.
Thus, identifying all $C$-elements in the domain $\Delta^s$ is $O(n \times b \times k^2)$ and, as $b$ cannot be larger than $n$, it is $O(n^2 \times k^2)$.

Let $S$ be the set of all $C$-elements. Identifying the $<$-minimal $C$-elements, among all $C$-elements, can be done in $O(n^2)$ steps,  starting from a set $S$ (whose size may be comparable to the size of $\Delta$), initialized to contain all $C$-elements, by iterating on all $C$-elements $y$ and by removing from $S$  all elements $y \in S$ such that $y' <y$ for some $y' \in \Delta$ (using a two nested loop).
Verifying $y' <y$ takes constant time only if the distances  $d(x,C_i)$, for all $x \in \Delta$ and all $C_i$, are computed and stored in advance, which is unlikely when the size of $\Delta^s$ is large. Otherwise, verifying $y' <y$ is $O(b \times k)$, as seen above. In this second case,  identifying the $<$-minimal $C$-elements, among all $C$-elements is $O(b\times k \times n^2)$, and hence $O(n^3 \times k)$.
 
For each $x \in min_{<}(C^I)$, verifying that $x \in D^I$ (as for $C$ above) is  $O(b \times k^2)$.
The size of the set $min_{<}(C^I)$ is smaller than the size of $\Delta^s$, hence verifying, for all $x \in min_{<}(C^I)$, that $x \in D^I$
is $O(n \times b \times k^2)$ and, hence,  $O(n^2 \times k^2)$.
 
We then have a sequence of steps, where each step is either $O(n^2 \times k^2)$ or $O(n^3 \times k)$. As $n \gg k$, 
verifying that a general typicality inclusion is satisified in $\emme^{som}$ is $O(n^3  \times k)$ 
 \qed
\end{proof} 
We have seen that, in the general case, verifying the satisfiability of an inclusion on the model  of the SOM may be non trivial, depending on the number of input stimuli that are considered in the learning phase (the size of the set $X$ of input exemplars and their best matching units) and on the size of $\Delta^s$. 
Gliozzi and Plunkett have  considered self-organising maps that are able to learn from a limited number of input stimuli, although this is not generally true for all self-organising maps \cite{CogSci2017}. 
In the following we will see that, however, verifying the satisfiability of strict inclusions $C_i \sqsubseteq C_j$ and defeasible inclusions $\tip(C_i) \sqsubseteq C_j$ where $C_i$ and $ C_j$ are categories (i.e. distinguished concepts in ${\cal C}$), requires a  
smaller number of steps, 
as it  does not require
to compute the distance of each stimulus from each concept representation.

In order to verify that a typicality inclusion $\tip(C_i) \sqsubseteq C_j$ is satisfied in $\emme^{som}$, we have to check that the most typical $C_i$ elements wrt $<_{C_i}$ are $C_j$ elements, that is,  $min_{<_{C_i}}(C_i^I) \subseteq C_j^I$. 
Note that, besides the elements in $BMU_{C_i} $, $min_{<_{C_i}}(C_i^I)$ might contain other elements of $\Delta^{s}$ having distance $0$ from $C_i$, namely those elements $y$ from the domain  
having distance $0$ from their best matching unit (elements whose weight vector exactly coincides with the weight vector of a best matching unit $BMU_x$ for $C_i$). In such a case, as $BMU_x$ is already in $BMU_{C_i} $, it is enough
 to verify that all elements in  $BMU_{C_i} $ 
are $C_j$-elements, that is: 
\begin{equation} \label{cond_typ_incl1}
\mbox{  for all  input stimuli $x \in C_i$, } d(BMU_x, C_j) \leq d_{max,C_j}
\end{equation}
that is, the distance of all best matching units of $C_i$ from $C_j$ is  smaller than the distance from $C_j$ of some $C_j$-element in the input set.
Let the distance of $BMC_{C_i}$ from $C_j$ be defined as follows:
$$d( BMC_{C_i},C_j) = max_{x \in C_i} \{d(BMU_x, C_j)\}$$
as the maximal distance of  any $BMU_{x}$, for $x \in C_i$, from $C_j$.  
 Then we can rewrite condition (\ref{cond_typ_incl1}) simply  as
 \begin{equation} \label{cond_typ_incl}
d( BMC_{C_i},C_j)  \leq  d_{max,C_j}.
\end{equation}
Observe that the distance $d( BMC_{C_i},C_j)$  provides a measure of plausibility of the defeasible inclusion $\tip(C_i) \sqsubseteq C_j$: the higher is the  distance of $BMU_{C_i}$ from $C_j$, the more implausible is the defeasible inclusion $\tip(C_i) \sqsubseteq C_j$. The lower is $d( BMC_{C_i},C_j)$, the more plausible is the defeasible inclusion $\tip(C_i) \sqsubseteq C_j$. This is relevant if we aim at extracting knowledge in the form of a set of conditionals from a SOM.
In particular,  using  the relative distance and the generalization degree  
in \cite{CogSci2017}, the value  $e ^{- rd( BMC_{C_i},C_j)}$ provides a degree of plausibility of a defeasible inclusion $\tip(C_i) \sqsubseteq C_j$ in the interval $(0,1]$.

Let us now consider the case of a strict inclusion $C_i \sqsubseteq C_j$, with $C_i, C_j \in {\cal C}$.
Verifying that $C_i \sqsubseteq C_j$ is satisfied by $\emme^{s}$, requires to check that $C_i^I$ is included in $C_j^I$. 
Exploiting the fact that the map is organized topologically, and using the distance $d( BMC_{C_i},C_j)$ of $BMC_{C_i}$ from $C_j$ defined above,
we can verify that the distance of $BMC_{C_i}$ from $C_j$ plus the maximal distance of a $C_i$-element from  $C_i$ is not greater than the maximal distance of a  $C_j$-element from $C_j$:
\begin{equation} \label{cond_strict_incl}
d( BMC_{C_i},C_j) + d_{max,C_i} \leq d_{max,C_j},
\end{equation}
that is, the most distant  $C_i$-element  from $C_j$ is nearer to $C_j$ than the most distant $C_j$-element.

Note that the verification of conditions (\ref{cond_typ_incl}) and  (\ref{cond_strict_incl}) does not require to record all the instances of the concepts $C_i$ and $C_j$ in $\Delta^s$, but to compute and record some measures, namely $d_{max,C_j}$ and  $d( BMC_{C_i},C_j) $, for all concepts (categories) $C_i, C_j \in {\cal C}$.
While computing $d_{max,C_j}$ during training only requires a constant overhead (by updating the current value of  $d_{max,C_j}$ for each input stimulus in $C_j$), computing $d( BMC_{C_i},C_j) $ requires to compute the maximum distance between the best matching units for $C_i$ and the  best matching units for $C_j$, requiring $O(b^2)$ steps, where $b$ is the number of best matching units for the input stimuli.
Once such measures have been computed,  
the verification  of conditions (\ref{cond_typ_incl}) and  (\ref{cond_strict_incl}) requires constant time. 
The next proposition follows:
\begin{proposition}
Checking the satisfiability of a strict  (resp., defeasible) inclusion of the form $C_i \sqsubseteq C_j$ (resp., $\tip(C_i) \sqsubseteq C_j$) in the interpretation $\emme^{som}$ 
 is  $O(b^2)$, 
 when $C_i$ and $C_j$ are distinguished concepts in ${\cal C}$ corresponding to learned categories.
\end{proposition}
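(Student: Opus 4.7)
The plan is to reduce the satisfiability check, in each of the two cases, to verifying one of the numerical conditions (\ref{cond_typ_incl}) or (\ref{cond_strict_incl}) already established in the preceding discussion, and then bound the cost of computing the quantities appearing in those conditions.

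First I would handle the defeasible case $\tip(C_i) \sqsubseteq C_j$. The discussion preceding the proposition has already shown that this inclusion holds in $\emme^{som}$ iff $d(BMC_{C_i},C_j) \leq d_{max,C_j}$, because the $<_{C_i}$-minimal $C_i$-elements either lie in $BMU_{C_i}$ or coincide (in weight vector) with some element of $BMU_{C_i}$, so membership in $C_j^I$ is determined by the distance of the best matching units for $C_i$ from the representation of $C_j$. I would then bound the cost of computing the two sides: $d_{max,C_j}$ can be maintained with a constant-time update for each training stimulus labelled $C_j$, hence is available at lookup time after training; the quantity $d(BMC_{C_i},C_j) = \max_{x\in C_i} \min_{y \in C_j} \|BMU_x - BMU_y\|$ requires, for each of the at most $b$ elements of $BMU_{C_i}$, a minimisation over the at most $b$ elements of $BMU_{C_j}$, giving $O(b^2)$ overall. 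The final comparison is constant time, so the defeasible case is $O(b^2)$.

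For the strict case $C_i \sqsubseteq C_j$, by the same preceding discussion the check reduces to condition (\ref{cond_strict_incl}), namely $d(BMC_{C_i},C_j) + d_{max,C_i} \leq d_{max,C_j}$. The same bookkeeping provides $d_{max,C_i}$ and $d_{max,C_j}$ in constant time after training, the quantity $d(BMC_{C_i},C_j)$ is again obtained in $O(b^2)$, and the arithmetic comparison is constant time. Taking the maximum over the two cases yields the $O(b^2)$ bound stated in the proposition.

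I do not foresee a real obstacle: the heavy lifting, namely the equivalence between the semantic inclusion and the numerical conditions (\ref{cond_typ_incl})–(\ref{cond_strict_incl}), is already carried out before the proposition. The only mild care needed is to stress that the quantities $d_{max,C_i}$ can be incrementally computed during training at no asymptotic cost, so that the only genuinely non-trivial contribution at checking time is the computation of $d(BMC_{C_i},C_j)$, whose naive evaluation is quadratic in $b$.
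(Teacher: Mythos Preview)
Your proposal is correct and follows essentially the same route as the paper: reduce each check to the numerical conditions (\ref{cond_typ_incl}) and (\ref{cond_strict_incl}) established just before the proposition, note that the $d_{max}$ quantities can be maintained incrementally during training, and observe that computing $d(BMC_{C_i},C_j)$ requires pairing the at most $b$ best matching units for $C_i$ with those for $C_j$, giving the $O(b^2)$ bound. Your explicit max--min formula for $d(BMC_{C_i},C_j)$ is in fact a slight sharpening of the paper's looser phrasing ``maximum distance between the best matching units for $C_i$ and the best matching units for $C_j$'', but the complexity argument and overall structure are the same.
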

 Notice that the number of the best matching units for $C_i$ is smaller than the size of $X$ and of the domain $\Delta^s$. Furthermore, 
as observed by Gliozzi and Plunkett, the number of best matching units may sometimes be significantly smaller than the set of the input stimuli. 
Even when the domain $\Delta^s$ only contains the input stimuli in $X$ and their best matching units,  
the verification of general typicality formulas over the SOM model   is $O(n_X^3  \times k)$, where $n_X$ is the size of set of input stimuli $X$ considered during training. It may be challenging in practice, due to the large number of input stimuli,  and more challenging than validating inclusions of the form  $C_i \sqsubseteq C_j$ and $\tip(C_i) \sqsubseteq C_j$, as $b$ may be significantly lower than $n_X$.
An alternative approach for reasoning about the knowledge learned by the SOM might be:
 first identify 
the set $K$ of strict and defeasible inclusions of the form $C_i \sqsubseteq C_j$ and $\tip(C_i) \sqsubseteq C_j$ satisfied in $\emme^{som}$  (where $C_1, \ldots, C_k$ are the learned categories); then exploit the defeasible knowledge base $K$ {\em extracted} from the SOM for {\em symbolic reasoning} in a conditional logic formalism.

For instance, as we have already mentioned, an approach which exploits Answer Set Programming and {\em asprin} \cite{BrewkaAAAI15} to achieve defeasible reasoning under the concept-wise multipreference semantics
has been proposed for reasoning with  ranked  $\elpb$ knowledge bases \cite{TPLP2020}.
The approach has been extended to deal with weighted  knowledge bases with real valued weights \cite{ICLP2021}.
As we have mentioned above, a measure of plausibility can be determined for the defeasible inclusions satisfied by the SOM, and it can be exploited for the definition of a weighted knowledge base. 
Other proposals, which are not based on multiple preferences, have been developed in the literature of non-monotonic and conditional reasoning  for dealing with weighted knowledge bases. Let us mention Weydert's System JLZ  \cite{Weydert03}, an approach which allows rational and real-valued ranking measures, and the work by Kern-Isberner and Eichhorn \cite{Kern-Isberner2014}, based on c-representations which also allow for plausibility weights.

\subsection{Dealing with specificity} \label{sec:specificity}

Let us conclude this section by commenting on the notion of specificity, that we have left aside up to this point.
Reasoning about specificity is an important feature of a non-monotonic formalism, when it is intended to deal with exceptions among classes.
This was recognized from the beginning by 
Baader and Hollunder  \cite{baader95b} who, in their work on prioritized defaults in description logics,  observe that
\begin{quote}
\em``the question of {\em how to prefer more specific defaults over more general} ones
[...]  is of general interest for default reasoning but is {\em even more important in the terminological case where the emphasis lies on the hierarchical organization of concepts}".
\end{quote}
Many non-monotonic extensions of description logics,  
including the ones based on circumscription \cite{BonattiLW06}, on the rational closure \cite{casinistraccia2010,AIJ15}, on the lexicographic closure \cite{Casinistraccia2012} and their refinements, conform to the principle that the specificity relation among concepts is to be taken into account. 

To see that specificity is also relevant in this context,
let us suppose that, among the categories $C_1, \ldots, C_k$, both the category Bird and the category Penguin are present.
Let us further suppose that, as a result of the categorization, the inclusion $\mathit{Penguin \sqsubseteq Bird}$ is satisfied in the model $\emme^{som}$.
Clearly, the preference relations  $\mathit{<_{Penguin}}$ and $\mathit{<_{Bird}}$ might not agree as typical penguins do not fly and are atypical as birds. In this case, the class $\mathit{Penguin}$ is more specific than the class $\mathit{Bird}$. 
 As we expect that typical $\mathit{Penguin \sqcap Bird}$-elements are penguins and hence do not fly, we would expect the typical instances of  concept $\mathit{Penguin \sqcap Bird}$, i.e., $\mathit{min_< (Penguin \sqcap Bird)}$, should correspond to the minimal elements with respect to $\mathit{<_{Penguin}}$, i.e., to $\mathit{min_{<_{Penguin}} (Penguin \sqcap Bird)}$. 
 
In Section  \ref{sec:multipref} we have defined the global preference relation in such a way that the specificity relation among concepts is taken into account.
The idea is that,  in case of conflicts, the properties of a more specific class, $\mathit{{Penguin}}$, should  override the properties of a less specific class, $\mathit{{Bird}}$, as $\mathit{{Penguin \succ Bird}}$ (concept $\mathit{{Penguin}}$ is more specific than $\mathit{{Bird}}$).
 
The definition of global preference allows us to deal with the specificity issues when they emerge, that is, when a category $C_i$ is subsumed by another $C_j$, as a result of categorization.
In such a case, we expect the subsumption $C_i \sqsubseteq C_j$ to hold in the model of the SOM $\emme^{som}$, whereas we do not expect the converse inclusion to hold. 

In other cases, the SOM might not recognize that concept $C_i$ is more specific than $C_j$, 
but we may know that it is the case from some alternative knowledge source (e.g., an ontology) rather than from the analysis of empirical data in the SOM model. 
In such a case, we can nevertheless exploit the specificity information $\mathit{Penguin \succ Bird}$ in the construction of the global preference relation $>$ from the preferences $<_{C_j}$ built from the SOM, or we can add the specificity inclusions to the defeasible knowledge base containing the strict and defeasible inclusions extracted from the SOM. This is a possible way of exploiting symbolic knowledge in combination with the knowledge extracted from empirical data, i.e., from the preferential model of SOM.

In this section, we have used the relative distance of a stimulus $x$ from a category $C_i$ to define a preference relation $<_{C_i}$ on the set of input stimuli. This preference relation determines the typical $C_i$-elements, the domain elements with the lowest relative distance from category $C_i$. 
In the next section, we will exploit the same notion of relative distance of a stimulus $x$ from a category $C_i$ to define a {\em degree of membership} of $x$ to category $C_i$. We will do this by associating  a fuzzy $\lc$ interpretation to a SOM.

\section{A fuzzy interpretation of a Self-Organising Maps} \label{sec:fuzzy_SOM}

In this section, we consider an alternative interpretation of Self-Organising Maps based on fuzzy description logic interpretations. 
As mentioned in Section \ref{sezione:fuzzyDL},
fuzzy description logics allows for representing vagueness in DLs and have been widely studied in the literature  \cite{Straccia05,Stoilos05,LukasiewiczStraccia09,PenalosaARTINT15,BobilloStracciaEL18}. 
In fuzzy description logics, a concept $C$ is interpreted as 
a function $C^I$  mapping each domain element $x$  to a  value in the interval $[0,1]$, a degree of membership in $C$.

We again follow Gliozzi and Plunkett's  similarity-based account of category generalization \cite{CogSci2017}  (see Section \ref{som-general}) where,
from the notion of relative distance $rd(x,C_i)$ of an input stimulus $x$ from category $C_i$, they define  
the {map's Generalization Degree} of category $C$ membership to a new stimulus $y$  (here denoted by $gd_C(y)$),  as:
$gd_C(y)= e^{-rd(y,C)} $.
The value of 
$gd_C(y,C)$ is in $(0,1]$ and can be regarded as the degree of membership of $y$ in $C_i$. 

As in the definition of the multipreference model $\emme^{som}$ in Section  \ref{sec:preference-modelSOM}, we let the domain $\Delta^s$ be the set of all the possible stimuli, including the input stimuli considered in the training phase and their best matching units in the map. We also consider the learned categories $C_1, \ldots, C_k$ as the concept names (atomic concepts) in the description logic, i.e., $N_C= \{C_1, \ldots, C_k\}$.
We let $N_I$ be the set of individual names as in previous section.

\begin{definition}[Fuzzy model of a SOM]\label{modello_fuzzy_Som}  
The {\em fuzzy model of the SOM} is a fuzzy interpretation 
 $I^{s} = \langle \Delta^s, \cdot^I \rangle$ 
such that:
\begin{itemize}
\item $\Delta^{s}$ is the set of possible stimuli, as introduced above; 

\item
the interpretation function $\cdot^I$ is defined  for individual names $a_x \in N_I$ as $a_x^I=x$, 
 and  for all named concepts $C_i$ as: $C_i^I(x) = e^{-rd(x,C_i)}$.
\end{itemize}
\end{definition}
With this definition, the fuzzy interpretation $I^{s}$ interprets each concept $C_i$ corresponding to a learned category as a fuzzy set.
As recalled in Section \ref{sezione:fuzzyDL}, the fuzzy interpretation of complex concepts is defined inductively, given a choice of t-norm, s-norm, implication function, and negation function.

Observe that, in the fuzzy model of the SOM, the (normalized) relative distance $rd(x,C_i)$ of $x$ from $C_i$ is used to define the value of $C_i^I(x)$, rather than the distance $d(x,C_i)$. 
As $rd(x,C_i) \leq 1$ for all instances $x$ of $C_i$, for all instances $x$ of a category $C_i$, it holds that $C_i^I(x) \leq e^{-1}$.  
For $y \in \Delta^s$, the value of $C_i^I(y)$ is $1$ when $rd(y,C_i)=0$ and  approaches $0$ when $rd(y,C_i)$ approaches $\infty$.

As we have seen in section \ref{sezione:fuzzyDL}, in the fuzzy interpretation $I^{s}$, a concept inclusion axiom $C_i \sqsubseteq C_j$ is interpreted as 
$(C_i \sqsubseteq C_j)^{I^{s}}= inf_{x \in \Delta}  \; C_i^{I^{s}}(x) \rhd C_j^{I^{s}}(x)$, which describes the degree of subsumption between the two fuzzy sets  $C_i^{I^{s}}$ and  $C^{I^{s}}_j$. 
Computing $(C_i \sqsubseteq C_j)^{I^{s}}$ requires the values $ C_i^{I^{s}}(x)$ and $ C_j^{I^{s}}(x)$ to be computed or 
recorded for all domain elements $x$, that is,  for all the stimuli considered in $\Delta^s$. The satisfiability of a fuzzy inclusion axiom such as $C_i \sqsubseteq C_j \;\geq\; n$ can then be evaluated in the fuzzy model $I^{s}$, by verifying that 
$(C_i \sqsubseteq C_j)^{I^{s}} \geq n$.
Although conceptually this is just model checking, it might be challenging in practice, depending on the size of the domain $\Delta^s$.

The possibility of verifying the satisfiability of fuzzy axioms over the model $I^{som}$ of the SOM allows for symbolic knowledge to be extracted from the SOM
in the form of fuzzy inclusion axioms. This knowledge can then be used for symbolic reasoning, based on the proof methods that have been developed for both  expressive and lightweight fragments of fuzzy OWL \cite{BobilloStraccia16,BobilloStracciaEL18}.  
The problem of learning fuzzy rules has been widely investigated in the context of fuzzy description logics 
based on other machine learning approaches \cite{LisiStraccia2015,StracciaMucci2015}.

For instance, referring to the running example,  we may want to check whether  elephants are big animals with a degree $\geq 0.6$, i.e., whether the fuzzy axiom $\mathit{Elephant \sqsubseteq }$ $\mathit{ Big\_Animal} \geq 0.7$ is satisfied in the interpretation $I^s$, and whether the fuzzy axioms
$\mathit{Elephant \sqcup Hyppos \sqsubseteq Big\_Animal} \geq 0.7$   and
$\mathit{Elephant \sqcap White\_Animal \sqsubseteq Afri}$- $\mathit{can\_Animal \sqcup Asian\_Animal} \geq 0.8$  hold in $I^s$.

Notice that, while the verification of fuzzy axioms on the fuzzy model $I^s$ depends 
 on all domain elements in $\Delta^s$,  the verification on the preferential model $\emme^s$ of the prototypical properties of a category $C_i$ by an inclusion $\tip(C_i) \sqsubseteq C_j$ only requires to  consider the typical $C_i$-elements, i.e., the elements in $BMU_{C_i}$, rather than all $C_i$-elements.
 Indeed, although the preferential and the fuzzy semantics of the SOM are constructed starting from the measures of  
distance of the input stimuli from the categories, the two semantics 
may provide quite different information about the SOM, which are expressed, respectively in the form of conditional assertions and of fuzzy concept inclusions.  
For instance, in the presence of high variability in the examples of a category, typical elements do not necessarily provide a precise representation of the category and of the properties of all its elements 
(we will return to the variability effect in the next section).
Although a typicality inclusion $\tip(C) \sqsubseteq D$ may be satisfied, as the typical $C$-elementas are $D$-elements, it might not be the case that the value of $(C \sqsubseteq D)^I$ is high (i.e., that it approaches 1), as there might be  elements in $C$ which are very exceptional with respect to property $D$.
The presence in the domain $\Delta^s$ of additional stimuli with respect to the input stimuli in $X$ (and their best matching units), may affect the valuation $(C \sqsubseteq D)^I$ of an inclusion $C \sqsubseteq D$ in the fuzzy interpretation $I^s$. This is the same for general typicality inclusions in the multipreference interpretation $\emme^{som}$ but, as mentioned above, this is not the case for typicality inclusion $\tip(C_i) \sqsubseteq C_j$, with $C_i$ and $C_j$ distinguished concepts in ${\cal C}$, whose evaluation only depends on the prototypical elements in the category $C_i$, i.e., on the best matching units for category $C_i$.

\normalcolor

\section{Towards a probabilistic interpretation of Self-Organising maps} \label{sec:probab}

In previous section, we have defined a fuzzy DL interpretation $I^{s} = \langle \Delta^s, \cdot^I \rangle$  as the semantics of a SOM after the training phase.
Each concept $C$ in  
$I^s$ is interpreted as a fuzzy set on $\Delta$ with membership function $C^I$.
The relationships between fuzzy sets and probability theory have been widely investigated in the literature \cite{Zadeh1968,Kosko90,DuboisP1993_fuzzy_probab},
and we will follow the proposal by Zadeh, who has shown that 
"the notions of an event and its probability can be extended in a natural fashion to fuzzy events" \cite{Zadeh1968}. In particular, we refer to a recent characterization of the continuous t-norms compatible with Zadeh's probability of fuzzy events ($P_Z$-compatible t-norms) by Montes et al. \cite{Montes2013},  to provide a probabilistic interpretation starting from the fuzzy interpretation $I^s$. We will see that, 
following this approach, 
we can regard $C^I(x)$ as the conditional probability of the (fuzzy) concept $C$, given element $x$.
This is in agreement with Kosko's
account of "fuzziness in a probabilistic world" \cite{Kosko90}. 

Zadeh has proven that the set of fuzzy events forms a $\sigma$-field with respect to the operations of complement, union and intersection in (Zadeh's) fuzzy logic.
Montes et al. \cite{Montes2013} have  
studied the problem of determining which t-norms make Zadeh's probability of fuzzy events fulfill Kolmogorov's axioms 
(called $P_Z$-compatible t-norms). 
For a given t-norm $T$, they consider a generalization of the notion of algebra to the fuzzy framework by means of T-clans.

 For a given universe $\Omega$, they let ${\cal F}(\Omega)$ denote the set of all fuzzy subsets on $\Omega$,
and a T-clan ${\cal A}$ be a subset of ${\cal F}(\Omega)$ satisfying the properties that:
\begin{itemize}
\item $\emptyset \in {\cal A}$,  where $\emptyset(x)=0$ for all $x \in \Omega$;
\item
if $A \in {\cal A}$, then $A^c \in {\cal A}$, where $A^c(w)= 1-A(w)$;
\item
if $A , B \in {\cal A}$, then $A \cap_{T} B \in {\cal A}$.
\end{itemize}
Given the universe $\Omega$, the set ${\cal F}(\Omega)$ of all fuzzy subsets on $\Omega$, and a probability measure $P$ over $\Omega$,
they define the probability of a measurable fuzzy event $A$ in a T-clan ${\cal A}$ on ${\cal F}(\Omega)$ 
as 
\begin{equation}\label{eq_Prob}
P(A)= \int_{\Omega} A(w)  dP(w).
\end{equation}
Montes et al.  \cite{Montes2013}  extend the result by Zadeh by identifying a class of $P_Z$-compatible t-norms, for which  the probability $P$ defined by equation (\ref{eq_Prob}) satisfies Kolmogorov's axioms (formulated in the fuzzy framework).  For a t-norm $T$, they consider the t-conorm $S_T(x,y)= 1- T(1-x,1-y)$.

Let us restrict to a $P_Z$-compatible t-norm $T$, with associated  t-conorm $S_T$ and the negation function $\ominus x= 1-x$.
For instance, we can assume the combination functions in Zadeh logic or in Lukasiewicz logic.
Observe that in the interpretation $I^{s}=\langle \Delta^s, \cdot^I \rangle$ constructed from the SOM, the domain $\Delta^s$ is a finite set of input stimuli 
and, based on the chosen t-norm $T$, each concept $C$ is interpreted as a fuzzy set on $\Delta^s$ with  membership function
$C^I: \Delta^s \ri [0,1]$ (a measurable function, being $\Delta$ finite).
Let  ${\cal F}(\Delta^s)$ denote the set of all fuzzy subsets on $\Delta^s$. 
Given $I^{s}=\langle \Delta^s, \cdot^I \rangle$, the collection ${\cal A}$ of the fuzzy sets $C^I$, for all DL concepts $C$, forms a $T$-clan on ${\cal F}(\Delta^s)$. 

By assuming a discrete probability distribution $p$ over $\Delta^s$, 
we can define the probability of the fuzzy set $C^I$, for each DL concepts $C$ as:
\begin{align}\label{eq_probab}
P(C^I)=\sum_{d \in \Delta^s} C^I(d) \; p(d)
\end{align}
As the interpretation $I^s$ is the unique fuzzy interpretation associated to the SOM, we can simply write $P(C)$, rather than $P(C^I)$ and read
$P(C)$ as the probability of a fuzzy concept $C$, given a probability distribution $p$ over the domain $\Delta^s$ of the input stimuli.

For a given input stimulus $x \in \Delta^s$, the probability the $x$ is an instance of a fuzzy concept $C$, can then be interpreted as the conditional probability   
$P(C \mid x) $ of $C$ given $x$ (where $x$ stands for the crisp set $\{x\}$). 
Following Smets \cite{Smets82}, we let the conditional probability of a fuzzy event $C$ given the fuzzy event $D$ be
\begin{align*}
P(C \mid D)=\frac{P(D \sqcap C) }{ P(D)}
\end{align*}
(provided $P(D)>0$) and depart from Zadeh's definition of conditional probability, which exploits the product of the fuzzy sets rather than the intersection.
As observed by Dubois and Prade  \cite{DuboisP1993_fuzzy_probab}, letting $P(G \mid F)= P(F \cap G)/ P(F)$   (for the fuzzy events $F$ and $G$),
generalizes both conditional probability 
and the fuzzy inclusion index advocated by Kosko \cite{Kosko92}. 

From the definition above of conditional probability and from condition (\ref{eq_probab}), it easily follows that $P(C \mid x) = C^I(x)$. In fact:

\begin{align*}
P(C \mid x)= \frac{ P((C \sqcap \{x\})^I) }{ P(\{x\}^I)} & =  \frac{ \sum_{d \in \Delta} (C \sqcap \{x\})^I(d) \; P(d) } {\sum_{d \in \Delta}  \{x\}^I(d) \; P(d) } 
\end{align*}
As $(C \sqcap \{x\})^I(d)= 0$ for $d\neq x$, and  $ \{x\}^I(d)=0$ for $d\neq x$, this simplifies to:
\begin{align*}
 &  \frac{  (C \sqcap \{x\})^I(x) \; P(x) } {  \{x\}^I(x) \; P(x) }
\end{align*}
and, observing that $(C \sqcap \{x\})^I(x)= C^I(x)$ and  $ \{x\}^I(x)=1$:
\begin{align*}
P(C \mid x) &  
=  C^I(x)
\end{align*}

Furthermore, if we assume a uniform probability distribution $P$ (i.e., $P(x)=1/n$, for all $x \in \Delta^s$,  where $n=|\Delta^s|$), it holds that
$$P(x|C)= C^I(x) / M(C)$$
where 
$M(C)= \sum_{x \in \Delta^s} C^I(x)$ is called the {\em size} of the fuzzy concept $C^I$. In fact,
\begin{align*}
P(x \mid C)= \frac{ P((C \sqcap \{x\})^I) }{ P(C^I)} & =  \frac{ \sum_{d \in \Delta} (C \sqcap \{x\})^I(d) \; P(d) } {\sum_{d \in \Delta}  C^I(d) \; P(d) } 
\end{align*}
As $(C \sqcap \{x\})^I(d)= 0$ for $d\neq x$ and $(C \sqcap \{x\})^I(x)= C^I(x)$,  this simplifies to:
\begin{align*}
 &    \frac{  C^I(x) \; P(x) } {\sum_{d \in \Delta}  C^I(d) \; P(d) } 
\end{align*}
and, assuming  a uniform probability distribution over $\Delta$: 
\begin{align*}
P(x \mid C) & =   \frac{  C^I(x) \;(1/n) } {(1/n) \; \sum_{d \in \Delta}  C^I(d) }  =   \frac{  C^I(x) } { \sum_{d \in \Delta}  C^I(d) } =   \frac{  C^I(x) } { M(C^I) } 
\end{align*}

Observe that the likelihood $P(x | C^I)$ decreases when the size of $C^I$ increases, an effect that recalls about the {\em size principle} by  Tenenbaum and Griffiths \cite{tengrif2001}); this principle is at the basis of their explanation of the numerosity and variability effects in category generalization.

Gliozzi and Plunkett \cite{CogSci2017} 
have shown that in self-organising  maps the  numerosity and the variability of the known instances of a category affect the quality of a category representation: the numerosity of known examples of a category improves the precision of the category representation whereas the variability of these examples diminishes this precision. 
The probabilistic interpretation of self-organising maps given above is in agreement with their experimental results.

For variability, if  the variability increases, $d_{max,{C_i}}= max_{x \in C} \| x- BMU_x\|$ increases as well. 
Then, $rd(y,C_i)$ decreases and consequently $C_i^I(y)=gd_{C_i}(y)$ increases.
As observed by  Gliozzi and Plunkett  \cite{CogSci2017},  an increase in variability of the known category examples generates an increase in generalization;
the higher is variability, the higher the probability of generalization outside the range.

For numerosity, let us consider an increase in the number of input stimuli of category $C_i$, all other things being equal (including the range of values of input stimuli).
Under these conditions, a larger number of input stimuli allows to form a more precise representation of the category (i.e., a representation with a lower  $d_{max,{C_i}}$). In fact, repeatedly updating the weights of the best matching units of ${C_i}$ (and their neighbours), makes their distance from the input stimuli smaller.

While Gliozzi and Plunkett \cite{CogSci2017} have demonstrated that the Numerosity and Variability effects can be accommodated within a simple and psychologically plausible similarity based account,  
in this section we have shown that their generalization degree can indeed be considered as a probability measure.

\normalcolor

\section{The training process as a belief change process} \label{sec:revision} 

We have seen that one can give an interpretation of a self-organising map after the learning phase, as a preferential model.
However, the state of the SOM during the learning phase can as well be represented as a multipreference model, precisely in the same way.
During training, the current state of the SOM corresponds to a model representing the beliefs about the input stimuli considered so far (beliefs concerning the category of the stimuli).

The learning process can then be regarded as a model building process and, in a way, as a belief change process. 
Initially we do not know  the category of the stimuli in the domain $\Delta^s$.
In the initial model, call it $\emme^{som}_0$ (over the domain $\Delta^s$) the interpretation of each concept $C_i$ is empty. 
$\emme^{som}_0$ can be regarded as the model of a knowledge base $K_0$ containing a strict inclusion $C_i \sqsubseteq \bot$, for all $C_i$.

Each time a new input stimulus ($x \in C_i$) is considered, the model is revised adding the stimulus $x$ (and its best matching unit $BMU_{x}$) into the proper category ($C_i$).
Not only the category interpretation is revised by the addition of $x$ and $BMU_{x}$ in $C_i^I$ (so that $C_i \sqsubseteq \bot$ does not hold any more), but also the associated preference relation $<_{C_i}$ is revised as 
the addition of $BMU_{x}$ modifies the set of best matching units $BMU_{C_i}$ for category $C_i$, as well as the distance $d(y,C_i)$ of a stimulus $y$ from $C_i$.  That is, a revision/update step may change the set of conditionals which are satisfied by the model.

At the end of the training process, the final state of the SOM is captured by the model $\emme^{som}$ obtained by a sequence of revision steps which, starting  from  $\emme^{som}_0$, gives rise to a sequence of models $\emme^{som}_0$,$\emme^{som}_{i_1}, \ldots$, $\emme^{som}_{i_r}$ (with $\emme^{som}=\emme^{som}_{i_r}$).
At each step  the knowledge base is not represented explicitly, but the model  $\emme^{som}_{i_j}$ of the knowledge base at step $j$ is used to determine the model at step $j+1$ as a result of the revision/update step ($\emme^{som}_{i_{j+1}}=\emme^{som}_{i_j} \star C_{i_j}(x_{i_j}) $).
The knowledge base $K$ (the set of all the strict and defeasible inclusions and assertions satisfied in $\emme^{som}$) can then be regarded as the knowledge base obtained from $K_0$ through a  sequence of revision/update steps. 
For future work, it would be interesting to study the properties of  this notion of change and compare its properties with the properties of the notions of belief revision \cite{gardenfors,belief-revision,KatsunoMendelzon89,Katsuno-Satoh:91} and iterated belief revision \cite{DP97,GiordanoSL2002,Kern-IsbernerAMAI2004,BoothKR2020} studied in the literature.

\section{Conclusions} \label{conclusions}
The concept-wise multipreference semantics has recently been introduced for dealing with typicality in  description logics \cite{TPLP2020}, based on the idea  
that reasoning about exceptions in ontologies requires taking into account preferences with respect to different concepts and integrating them into a single global preference providing a preferential semantics which allows a standard, KLM style, interpretation of defeasible inclusions.

In this paper, we have explored the relationships between the concept-wise multipreference semantics for the description logic $\lc$ and self-organising maps. 
On the one hand, we have seen that self-organising maps can be given a logical semantics in terms of KLM-style preferential interpretations. 
The model can be used to learn or to validate conditional knowledge from the empirical data used for training and generalization, based on model checking. The learning process in the self-organising map can be regarded as an iterated  
belief change process. 
On the other hand, the plausibility of concept-wise multipreference semantics is supported by the fact that self-organising maps 
are considered as psychologically and biologically plausible neural network models. 

The concept-wise multipreference semantics is related to
 the multipreference semantics for $\alc$ developed by Gliozzi \cite{GliozziAIIA2016}, which is based on the idea of refining the rational closure construction considering the preference relations $<_{A_i}$ associated to different aspects, but the concept-wise multipreference semantics follows a different route concerning both the definition of the preference relations associated with concepts, and  the way of combining them in a single preference relation. 
The idea of having different preference relations, associated to different typicality operators, has been studied by Gil \cite{fernandez-gil} to define a multipreference formulation of the description logic $\alctmin$,  
a typicality DL with a minimal model preferential semantics. 
In this proposal, we associate preferences with concepts,  
and we  combine such preferences into a single global one. 
An extension of DLs with multiple preferences has also been developed by Britz and Varzinczak \cite{Britz2018,Britz2019} to define defeasible role quantifiers and defeasible role inclusions, by associating multiple preference relations with roles.  A related semantics with multiple preferences has also been proposed in the first-order logic setting by Delgrande and Rantsaudis \cite{Delgrande2020}. 

Let us observe that alternative notions of preference combinations can be devised beyond the definition of a global  
preference relation $<$. In some cases, one may want to define preferences with respect to specified criteria of preference combination, as in Brewka's framework of basic preference descriptions  \cite{Brewka04}. In this direction, an algebraic framework for preference combination in Multi-Relational Contextual Hierarchies has  been recently developed by Bozzato et al. \cite{BozzatoIclp2021}.
A related problem of commonsense concept combination has been addressed in a probabilistic extension of the typicality description logic $\alctr$ 
 \cite{Lieto2018}.
Another simple approach to concept combination exploits the fuzzy interpretation of concepts, as introduced in Section \ref{sec:fuzzy_SOM}, for associating preferences to complex concepts \cite{JELIA2021}. In fact, a fuzzy interpretation induces a preference ordering on the domain for each concept. This approach has been exploited for developing a  
fuzzy-multipreference semantics for Multilayer Perceptrons, which allows a deep neural network to be regarded  as a weighted conditional knowledge base.
The approach exploits a combination of the fuzzy and the multipreference semantics considered in this paper.
A fuzzy extension of preferential logics has been previously studied by Casini and Straccia \cite{CasiniStraccia13_fuzzyRC} for G\"odel logic, based on the Rational closure construction.

The logical interpretation of a self-organising map through a fuzzy DL interpretation allows properties of the SOM, expressed as fuzzy concept inclusions, to be verified by model checking.
The correspondence between neural network models and fuzzy systems has been first investigated by Kosko in his seminal work  \cite{Kosko92}.
In his view, ``at each instant the n-vector of neuronal outputs defines a fuzzy unit or a fit vector. Each fit value indicates the degree to which the neuron or element belongs to the n-dimentional fuzzy set." In our approach, in a fuzzy interpretation of a SOM, each concept (representing a learned category) is regarded as a fuzzy set over a domain (here, a set of input stimuli) 
which is the usual way of viewing concepts in fuzzy description logics \cite{Straccia05,LukasiewiczStraccia08,BobilloStraccia16}. 
This allows the validation of fuzzy inclusion axioms over the model of the SOM, as usual in fuzzy DLs. In this direction, one could also consider using fuzzy modifiers ({\em very, slightly}, etc.), which have also been introduced in fuzzy DLs \cite{LukasiewiczStraccia08}, for the formulation of properties to be checked over the fuzzy model of the SOM (e.g., are very big animals either elephants or hippos with a degree $\geq 0.8$?).

The problem of learning fuzzy rules has been widely investigated in the context of fuzzy description logics \cite{LisiStraccia2015,StracciaMucci2015} 
based on other machine learning approaches.
The objective of our work is to show that a fuzzy interpretation of self-organising maps  is possible and natural, and that, it can be used for the validation of fuzzy inclusion axioms over the SOM by model checking.  

Much work has been devoted, in recent years, to the combination 
of neural networks and symbolic reasoning \cite{GarcezBG01,GarcezLG2009,GarcezGori2019}, leading to the definition of new computational models \cite{GarcezGori2020,SerafiniG16,Lukasiewicz2020,PhuocEL21}
and to extensions of logic programming languages
with neural predicates \cite{DeepProbLog18,NeurASP2020}.
Among the earliest 
systems combining logical reasoning and neural learning are the Knowledge-Based Artificial Neural Network (KBANN) \cite{KBANN94} and the Connectionist Inductive Learning and Logic Programming (CILP) \cite{CLIP99}
systems. Penalty Logic \cite{Pinkas95}, a non-monotonic reasoning formalism, was proposed as a mechanism to represent weighted formulas in energy-based 
symmetric connectionist networks (SCNs), 
where the search performed by the SCN for a global minimum may be viewed as a search for a model minimizing penalty. 
The relationships between normal logic programs and connectionist network have been investigated by Garcez and Gabbay \cite{CLIP99,GarcezBG01}
and by Hitzler et al. \cite{HitzlerJAL04}.
None on these approaches addresses the problem of developing a logical interpretation of SOMs.

The logical interpretation of self-organising maps in terms of multipreference  and fuzzy interpretations, besides providing a logical interpretation to SOMs, which may be of interest from the side of explainable AI \cite{Adadi18,Guidotti2019,Arrieta2020}, can  potentially be exploited, as described above, as a basis for an integrated use of self-organising maps and defeasible knowledge bases (resp., fuzzy knoweldge bases).
While a neural network, once trained, is able and fast in classifying the new stimuli (that is, it is able to do instance checking), all other reasoning services such as satisfiability, entailment and model-checking are missing. 
These capabilities are needed for dealing with tasks combining empirical and symbolic knowledge, e.g., 
proving whether the network validates some (strict or conditional) properties;   
learning the weights of a conditional KB from empirical data; 
 combining the defeasible inclusions extracted from a neural network with other defeasible or strict inclusions for inference.
We have seen in the paper that some of these tasks (including property validation and the extraction of defeasible inclusions with a weight) can be achieved over the SOM model.

The idea of constructing a semantic interpretation of a neural network based on a fuzzy and/or  a multipreference semantics  has also been  pursued for  Multilayer Perceptrons (MLPs) in \cite{JELIA2021}.
A deep network is considered after the training phase, when the synaptic weights have been learned, 
to show that  it can be associated with a preferential DL interpretation with multiple preferences, as well as with a semantics based on fuzzy DL  interpretations 
and another one combining fuzzy interpretations with multiple preferences. 
The three semantics allow the input-output behavior of the network to be captured by interpretations built over a set of input stimuli through a simple construction, which exploits the activity level of neurons for the stimuli, 
those units whose meaning we want to reason about,  including hidden units 
(see, for instance, the discussion in \cite{JELIA2021} of the well known Hinton's family example \cite{Hinton1986}). 
Logical properties can be verified over such models by model checking.
Due to the diversity of the two neural models (SOMs and MLPs) we expect that this approach might be extended to other neural network models and learning approaches, by exploiting units activations or notions of distance of a stimulus from a category (as done for SOMs).

The relationships between the logics of common sense reasoning and Multilayer Perceptrons are even deeper, as a deep neural network can be regarded as a conditional knowledge base with weighted conditionals. This has been achieved by developing a concept-wise fuzzy multipreference semantics for a DL with weighted defeasible inclusions \cite{JELIA2021}, as a combination of the fuzzy and the multipreference semantics considered in this paper. 

Whether conditional description logics under a multi-preferential and/or fuzzy semantics can be regarded as possible 
candidates for neuro-symbolic integration is a subject for future investigation.
An issue is 
the development of proof methods for such logics.  
An open problem is whether the notion of fuzzy-multipreference entailment  is decidable, for which DLs fragments
and under which choice of fuzzy logic combination functions.
In the two-valued case multipreference entailment is decidable for weighted $\el$ KBs and can be computed based on ASP encodings \cite{ICLP2021}, by exploiting preferential reasoning in {\em asprin} \cite{BrewkaAAAI15}. This is a first step towards the definition of proof methods for multi-valued extensions of our concept-wise 
preferential semantics based on a notion of faithful interpretations \cite{ECSQARU2021}. 
%
Another issue is whether the mapping of multilayer networks  
to weighted conditional knowledge bases can be extended to more complex neural network models, such as Graph neural networks \cite{GarcezGori2020},
or whether different logical formalisms and semantics would be needed.

\medskip
{\bf Acknowledgement:}  We thank the anonymous referees for their helpful comments and suggestions that helped to improve the paper. 
This research has been partially supported by INDAM-GNCS Project 2020. 


\end{document}